\documentclass[lettersize,journal]{IEEEtran}

\usepackage{graphicx}
\usepackage{amsmath, amssymb}
\usepackage[numbers, sort & compress]{natbib}
\usepackage[colorlinks=true,
            linkcolor=blue,     %
            citecolor=blue,     %
            urlcolor=blue       %
           ]{hyperref}
\expandafter\def\expandafter\UrlBreaks\expandafter{\UrlBreaks\do\-}
\usepackage[acronym,nopostdot]{glossaries}
\usepackage{changepage} %
\usepackage[linesnumbered,ruled,vlined]{algorithm2e}

\usepackage{xcolor}
\usepackage{array}
\glsdisablehyper
\newacronym{gdm}{GDM}{gas distribution mapping}
\newacronym{ipp}{IPP}{informative path planning}
\newacronym{nbt}{NBT}{next-best trajectory}
\newacronym{ucb}{UCB}{upper confidence bound}
\newacronym{slam}{SLAM}{simultaneous localisation and mapping}
\newacronym{ogm}{OGM}{occupancy grid mapping}
\newacronym{pomdp}{POMDP}{partially observable Markov decision process}
\newacronym{gmrf}{GMRF}{Gaussian Markov random field}
\newacronym{gbp}{GBP}{Gaussian belief propagation}
\newacronym{xit}{XIT}{Exploration and Exploitation Informed Trees}
\newacronym{wgfd}{WGFD}{Wavefront Gas Frontier Detection}
\newacronym{bit*}{BIT*}{batch informed trees}
\newacronym{wfd}{WFD}{wavefront frontier detection}
\newacronym{ffd}{FFD}{fast frontier detection}
\newacronym{ros}{ROS}{Robot Operating System}
\newacronym{bfs}{BFS}{breadth-first search}
\newacronym{gsl}{GSL}{gas source localisation}
\newacronym{kld}{KLD}{Kullback-Leibler divergence}

\newtheorem{theorem}{Theorem}        
\newtheorem{lemma}[theorem]{Lemma}            
\newtheorem{proposition}[theorem]{Proposition}

\newtheorem{remark}[theorem]{Remark}

\title{XIT: Exploration and Exploitation Informed Trees for Active Gas Distribution Mapping in Unknown Environments}

\author{
Mal Fazliu,
Matthew Coombes,
Sen Wang,
and Cunjia Liu%
\thanks{M. Fazliu, M. Coombes, and C. Liu are with the Department of Aeronautical and Automotive Engineering,
Loughborough University, Leicestershire, UK, LE11 3TU
(e-mails: \{m.fazliu2, m.coombes, c.liu5\}@lboro.ac.uk).}%
\thanks{S. Wang is with the Department of Electrical and Electronic Engineering, Faculty of Engineering, Imperial College London, South Kensington Campus, London, UK, SW7 2AZ
(e-mail: sen.wang@imperial.ac.uk).}%
}

\begin{document}
\maketitle

\begin{abstract}

Mobile robotic gas distribution mapping (GDM) provides critical situational awareness during emergency responses to hazardous gas releases. However, most systems still rely on teleoperation, limiting scalability and response speed. Autonomous active GDM is challenging in unknown and cluttered environments, because the robot must simultaneously explore traversable space, map the environment, and infer the gas distribution belief from sparse chemical measurements. We address this by formulating active GDM as a next-best-trajectory informative path planning (IPP) problem and propose XIT (Exploration and Exploitation Informed Trees), a sampling-based planner that balances exploration and exploitation by generating concurrent trajectories toward exploration-rich goals while collecting informative gas measurements en route. XIT draws a batch of samples from an Upper Confidence Bound (UCB) information field derived from the current gas posterior and expands trees using a cost that trades off travel effort against information acquisition. To enable plume-aware exploration, we introduce the gas frontier concept, defined as unobserved regions adjacent to high gas concentrations, and propose the Wavefront Gas Frontier Detection (WGFD) algorithm for their identification. High-fidelity simulations and a real-world experiment demonstrate the benefits of XIT in terms of GDM quality and efficiency. Although developed for active GDM, XIT is readily applicable to other robotic information-gathering tasks in unknown environments that face the exploration and exploitation trade-off.

\end{abstract}

\begin{IEEEkeywords}
    Gas distribution mapping, active sensing, informative path planning, next-best-trajectory, frontier exploration. 
\end{IEEEkeywords}

\section{Introduction}

In the event of a gas leak, timely and reliable acquisition of gas dispersion information is critical to assessing risk and responding effectively; any delay can have serious consequences for human safety and environmental integrity \cite{Murphy2012}. At the scene, gas concentration levels are regarded as the most valuable raw data that can be measured. To gather this information, mobile robotic platforms equipped with chemical and perceptual sensors are a standard choice, as they can access polluted and uncertain areas unsafe for humans and withstand harsher conditions. Still, these platforms offer limited value without a tool to localise each concentration reading and reconstruct a gas distribution model. \Gls{gsl} methods, such as source term estimation, face challenges in unknown environments where structural mapping is required, and are further limited in indoor or cluttered settings without a prevailing wind field or when dealing with complex multi-source releases \cite{Francis2022}. In such scenarios, \gls{gdm} provides an alternative by constructing spatial maps of gas concentration from spatiotemporal chemical measurements.

Many \gls{gdm} algorithms have been developed for robotic gas sensing and mapping \cite{Lilienthal2004GDM,Stachniss137562,Lilienthal2009,Monroy2016,wiedemann2018multi,Ojeda2023}. One popular representation is the \gls{gmrf} model \cite{Monroy2016}, as it is able to efficiently encode gas measurements and spatial dispersion within a factor graph framework, in which obstacle information shapes the graph connectivity, making it suitable for cluttered or indoor environments. Recently, \cite{Rhodes2023STRUC} solved this factor graph online using \gls{gbp} \cite{Rhodes2022}, demonstrating scalable real-time \gls{gdm} in both two and three dimensions in unknown environments through performing structural mapping on a teleoperated mobile robot.

While remote operation demonstrates the potential of online robotic \gls{gdm} in realistic scenarios, it relies on communication and requires specialists on hand, which can slow response times in critical situations. There is therefore strong incentive to pursue an autonomous \gls{gdm} solution that offers faster response and improved performance through \gls{ipp} \cite{bai2021information,POPOVIC2024104727}, that is, an active sensing principle that closes the loop by actively selecting sensing locations as the gas distribution belief and structural mapping evolve. We refer to this closed-loop paradigm as \emph{active} \gls{gdm}.

\begin{figure}[t]
    \centering
    \includegraphics[width=0.9\linewidth]{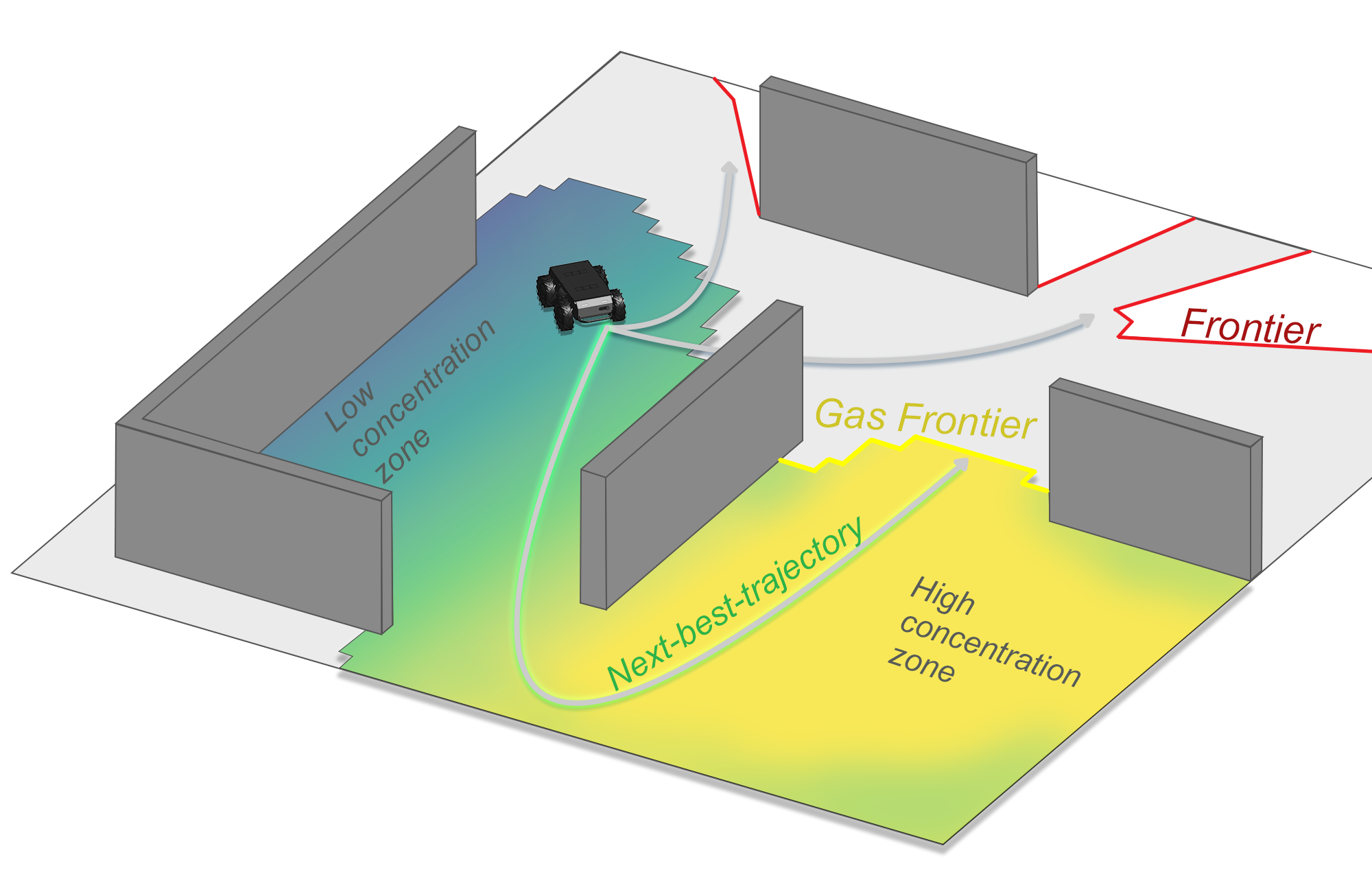}
    \caption{Overview of the XIT planner for active GDM. The figure illustrates a robot constructing a gas distribution map from concentration measurements in real time and planning gas-exploitative, distance-aware trajectories toward both gas frontiers and environmental frontiers. The three candidate paths generated by XIT are shown in grey, with the next-best trajectory (NBT) highlighted in green, representing the path with the highest expected information value, balancing gas dispersion inference and spatial exploration where possible.}
    \label{fig:intro_overview}
\end{figure}

Previous studies on \gls{ipp} for \gls{gdm} focus on gas inference and address exploiting gas-rich regions and gas map coverage, but often fall short in real-world settings due to two main limitations: (i) the computational or modelling limitations of the \gls{gdm} function and (ii) the assumption of a known map of the environment. As discussed, the former has lately been mitigated through scalable inference using \gls{gbp} for the \gls{gmrf} \cite{Rhodes2023STRUC}. The latter remains a central challenge for active \gls{gdm}, where the robot must make informative sensing decisions while its traversable workspace is still being discovered.

 More broadly, recent active mapping and \gls{ipp} methods have made substantial progress in unknown environments, including tree-based next-best-trajectory or next-best-view planners \cite{Schmid8968434,Lindqvist2024,Border2024IJRR}, semantic information gathering and mapping \cite{Asgharivaskasi2023,Ruckin2023}, and adaptive sampling-based planning for environment monitoring \cite{Hollinger2014,GhaffariJadidi2019,Moon2026TRO}. Learning-based approaches have also been developed for adaptive target discovery and multi-robot exploration \cite{Vashisth2024,Cheng2025TRO}. These methods address important geometric, semantic, and task-specific information-gathering problems. However, active \gls{gdm} requires the joint treatment of unknown-environment exploration, obstacle-aware gas inference, and trajectory-level gas information acquisition. A realistic planner must therefore uncover traversable space while deciding where to collect informative gas measurements.

To address this gap, we propose an active \gls{gdm} framework for unknown and cluttered environments, where the robot must reason about multimodal sensor information and balance exploration and exploitation objectives. These objectives include:
(i) environmental exploration, encouraging the robot to explore its surroundings to build a geometric map and uncover areas that may contain gas;
(ii) gas structure exploration, probing newly discovered gas regions to explore the extent and shape of the plume; and
(iii) gas exploitation, revisiting high-value gas regions to improve confidence in the \gls{gdm} model.
An illustration depicting the mobile robot performing active \gls{gdm} with the solution presented in this paper is included in Fig.~\ref{fig:intro_overview}. Our strategy is structured around guiding the robot toward exploration-rich destinations while simultaneously exploiting valuable information en route. Inspired by the decoupling principle for active \gls{slam} in \cite{Placed2023}, the proposed framework first generates a set of informative paths to each exploratory goal, then selects the \gls{nbt} to execute.

At the core of this framework, we introduce \gls{xit}, a multi-objective, sampling-based planner adapted from \gls{bit*} \cite{Gammell2015} that facilitates our \gls{nbt} approach for active \gls{gdm} in unknown environments. \gls{xit} constructs quasi-simultaneous paths toward goal regions selected for their exploration potential, while exploiting key gas regions en route through information-guided sampling and tree expansion, such that each generated trajectory jointly addresses exploration and exploitation objectives. Sampling is performed over an \gls{ucb} map derived from the current gas state estimate, and tree expansion is guided by a \gls{ucb}–distance cost function that favours regions of high gas concentration, high uncertainty, and low travel cost. By integrating informed sampling with guided tree expansion, \gls{xit} limits expansion into less informative areas, thereby reducing unnecessary search effort and yielding high-quality paths to each goal.

The main contributions of this work are summarised as follows:
\begin{itemize}
    \item \textbf{\gls{xit}}, a sampling-based \gls{ipp} planner that constructs multi-goal informed trees over a \gls{ucb}-based information field to trade off information acquisition and travel cost, with graph optimality on fixed sample-induced graphs and probabilistic completeness and asymptotic optimality established under the stated assumptions;
    \item To our knowledge, the first \textbf{active \gls{gdm} system} that jointly reasons about environmental exploration, gas structure exploration, and gas exploitation, validated through high-fidelity simulation and a real-world gas leak experiment;
    \item \textbf{gas frontier}, a novel concept for detecting plume boundaries to support plume-aware exploration, and the {\gls{wgfd}} algorithm for their identification.
\end{itemize}

\section{Related Work}
Path planning approaches for \gls{gdm} have historically utilised predetermined coverage trajectories, such as lawnmower sweeps, that do not adapt based on observed gas distributions \cite{Lilienthal2009, Neumann2012, Luo2015, Burques2019Nano}. These fixed patterns fail to efficiently leverage the impact of measurement locations on \gls{gdm} accuracy. \Gls{ipp} addresses this by adaptively selecting sensing positions that evolve with the current gas distribution belief.

Early \gls{ipp} approaches for \gls{gdm} iteratively selected individual sampling locations based on the current gas distribution estimate. The first explicit use of \gls{ipp} for \gls{gdm} appears in \cite{Neumann2012microdrones}, where an adaptive sensing strategy based on artificial potential fields balanced exploitation of high-concentration and high-variance regions with gas map coverage via repulsive forces from previous measurements, converging faster than fixed sweeps. They utilised the Kernel DM \cite{Lilienthal2004GDM} for \gls{gdm}, a lightweight data-driven method that was popular at the time, in particular the DM+V/W variant \cite{Reggente2009} which incorporates wind information but not temporal dependencies. The main drawback of Kernel DM (and its variants) has been its inconvenience in accounting for obstacle effects on the plume. In \cite{Rhodes2020}, several reward functions were examined using the original \gls{gmrf} formulation \cite{Monroy2016}, which permits effective obstacle modelling, facilitating \gls{ipp} for \gls{gdm} in cluttered environments to be tested and validated using a gas release CFD dataset. A* search was employed to compute distance-based traversal costs between the sampling goal locations. The most effective reward combined concentration with time-weighted joint uncertainty, yielding substantially faster convergence than a non-adaptive sweep. \citet{Ercolani2022} focused on 3D \gls{gdm} and opted for the Kernel DM+V/W algorithm, citing its lightweight computational demands and the lack of a scalable solver for \gls{gmrf} at the time. They evaluated \gls{ipp} methods that prioritised uncertainty and variability rather than exploiting high-concentration regions, including Lévy flight, entropy-based, and \gls{kld} criteria. While these strategies produced detailed maps over limited areas, they exhibited suboptimal exploratory behaviour, becoming trapped in locally informative regions and failing to achieve broader coverage. To address this limitation, they introduced a clustering algorithm to partition the operational space into clusters to visit sequentially, enforcing broader coverage while maintaining adaptive sampling within each cluster. 

More recent work has addressed planning with non-myopic forward simulation \cite{Gongora2023} and multi-waypoint trajectory execution \cite{Nanavati2024}. A \gls{gsl} technique known as infotaxis was adapted to \gls{gdm} in \cite{Gongora2023} using \gls{pomdp} planning with the gas and wind (GW) variant of the \gls{gmrf} \cite{Gongora2020}, which was computed using a sparse direct solver. The approach performs finite-horizon forward simulation to evaluate multi-step action sequences in the four cardinal directions (N, W, S, E) using a \gls{kld}-based reward, executing only the first action of the planned sequence before replanning in a receding-horizon manner. Forward simulation requires evaluating GW-\gls{gmrf} multiple times per planning cycle, which the authors acknowledge restricts real-time applicability. More broadly, GDM formulated as a POMDP can be addressed using general-purpose Monte Carlo tree search (MCTS) solvers \cite{mern2021bayesian,mern2021improved}; however, these operate at the level of sequential action selection rather than collision-free informative trajectory generation in a partially known environment. A two-phase low-to-high resolution framework using \gls{gmrf} with \gls{gbp} was proposed in \cite{Nanavati2024}, first performing sparse sampling to identify high-concentration regions, then selecting the highest-reward cells and planning distance-optimised trajectories through them for dense sampling within fixed time budgets. While this framework executes complete multi-waypoint paths, it rigidly separates exploration and exploitation into sequential phases rather than considering both objectives within each trajectory. Recent learning-based \gls{gdm} has also been considered in GDM-Net++ \cite{Kulbaka2025IROS}, which combines deep Q-learning with Gaussian-process regression in a multi-robot setting. This represents a distinct formulation from the obstacle-aware online active-GDM setting considered here, based on \gls{gmrf} inference with \gls{gbp}.

Of the aforementioned works, \cite{Neumann2012microdrones}, \cite{Ercolani2022}, and \cite{Nanavati2024} do not consider obstacles in their planning, while \cite{Rhodes2020} and \cite{Gongora2023} assume prior knowledge of the environment via predetermined occupancy maps. These assumptions limit applicability to realistic emergency response scenarios, where unknown and cluttered environments necessitate joint reasoning about environmental exploration in conjunction with effective gas distribution inference. Even \gls{gsl}, the alternative approach to gas distribution modelling that has been more extensively studied \cite{Francis2022}, struggles to address the full problem. Recent work, such as \cite{Wang10665938, Kim2025DualModeGSL}, has extended \gls{ipp} to \gls{gsl} in unknown environments, yet such methods are typically formulated assuming a fixed number of sources, most often a single source. This limits generalisability when source configurations are unknown, even in state-of-the-art approaches that leverage \gls{gdm} to update beliefs over source parameters in known environments \cite{Ojeda2023}. The only work to address unknown environments for \gls{gdm} is \cite{Prabowo2022}, which performs online \gls{ogm} alongside Kernel DM+V \cite{Lilienthal2009} for \gls{gdm}. Their system uses a finite state machine that switches between frontier exploration and gas exploitation based on manually tuned thresholds. In gas mode, a single high-cost cell is selected and an RRT-based planner generates a collision-free path to it. While the planner accounts for obstacles, the underlying Kernel DM+V model does not, and unlike \cite{Nanavati2024}, only the goal location is informed by the current gas belief, not the trajectory.

 In this work, we propose \gls{xit}, a sampling-based planner that generates candidate informed trajectories considering both environmental exploration and gas inference objectives, and selects the most informative trajectory for execution. This aligns with the next-best-trajectory concept \cite{Lindqvist2024}, introduced for autonomous exploration in unknown environments, where RRT-based expansion generates multiple frontier-reaching paths and trajectories are selected by trading off information gain against distance and actuation cost. \gls{xit} generalises this idea into a more unified and ordered approach for active \gls{gdm}, growing one shared tree towards both occupancy and gas frontiers. Moreover, unlike \cite{Lindqvist2024}, tree growth in \gls{xit} is informed, with gas exploitation incorporated directly into sampling and tree expansion through a \gls{ucb}-based information field. Unlike standard \gls{bit*} \cite{Gammell2015}, which is formulated around a single-goal planning query, \gls{xit} evaluates multiple frontier-derived target regions within one shared search structure. Crucially, by operating within an obstacle-aware \gls{gdm} formulation, \gls{xit} does not require assumptions about the number or locations of gas sources and is applicable to complex multi-source releases.

\section{Active GDM: Formulation and System}
\label{sec:active-gdm-formulation-system}
This section formalises the representations and planning concepts used throughout our active GDM framework. We first describe how the environment and gas fields are modelled, then introduce the frontier-based goal regions that drive exploration, and finally outline the next-best-trajectory formulation and the overall system architecture (see Fig. \ref{fig:overview}) that support online decision making.

\begin{figure*}[t]
\begin{adjustwidth}{0cm}{-0.6cm}
    \centering
    \begin{minipage}[t]{0.5\textwidth}
        \centering
        \includegraphics[width=\linewidth, height=7.6cm]{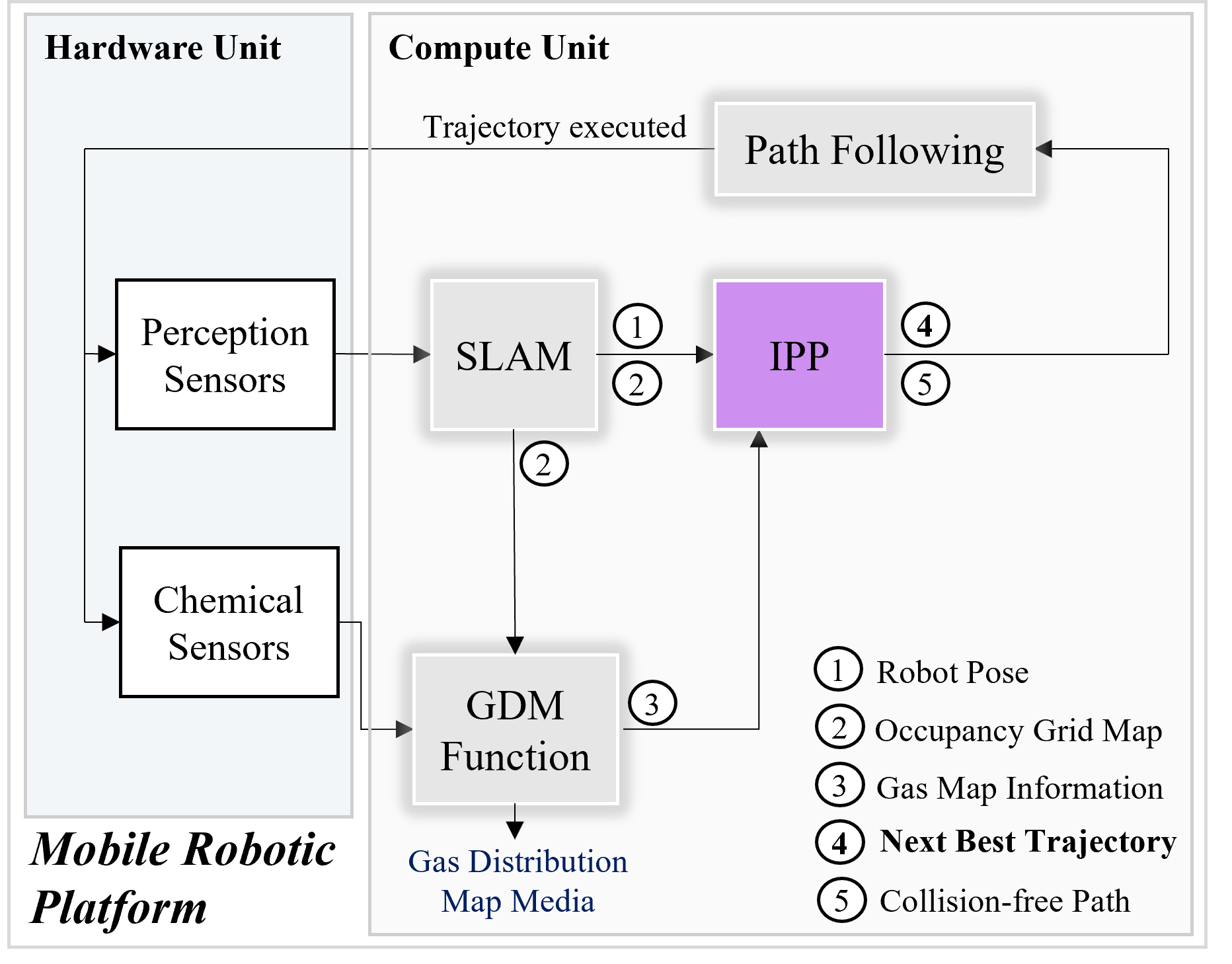}
        \scriptsize \textbf{(a)}
        \label{fig:overview2}
    \end{minipage}%
    \hspace{-2.5mm}
    \begin{minipage}[t]{0.5\textwidth}
        \centering
         \includegraphics[width=\linewidth, height=7.8cm]{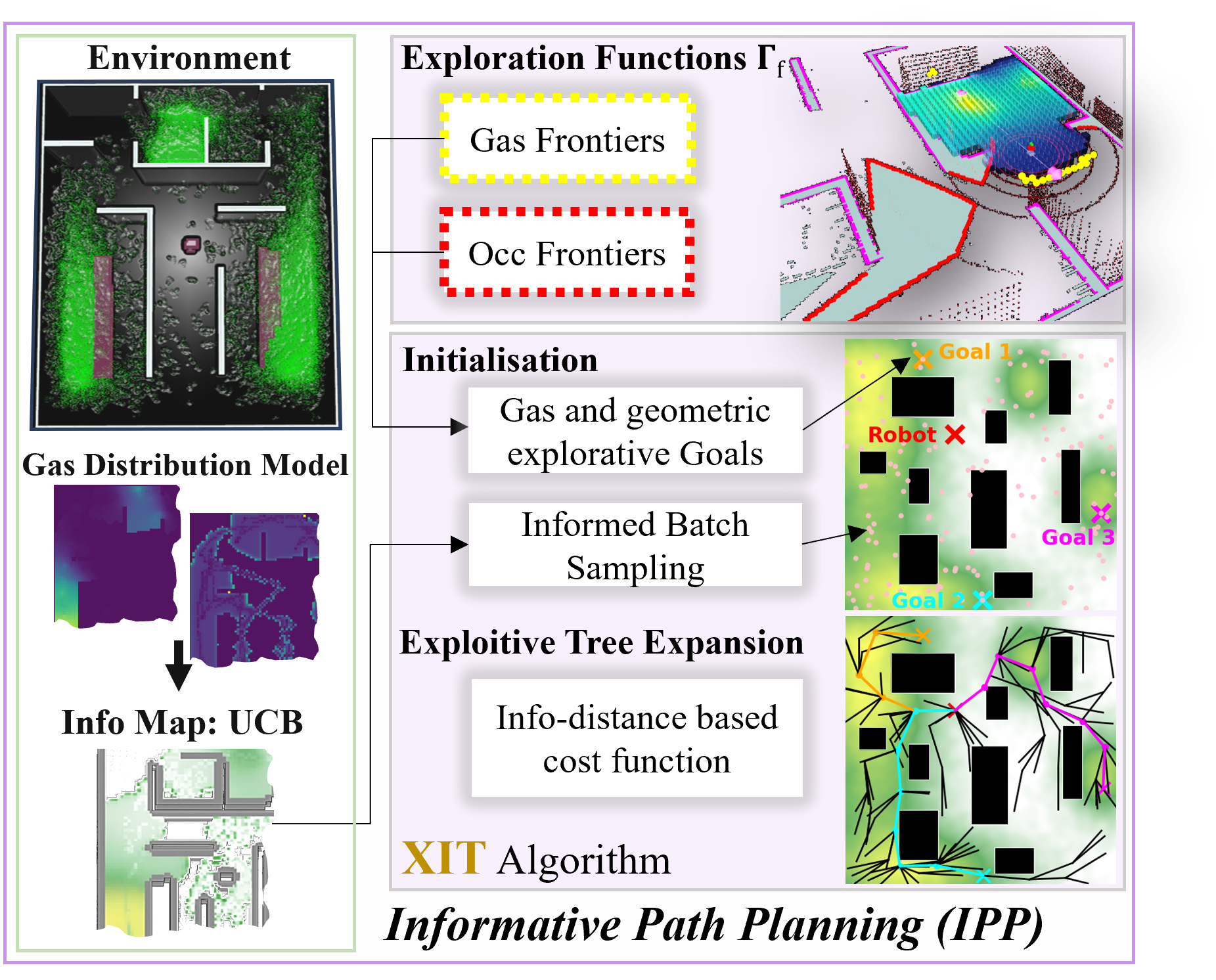}
        \scriptsize \textbf{(b)}
        \label{fig:overview3}
    \end{minipage}
    \caption{System overview of the proposed active \gls{gdm} framework. On the left (a) is a general system architecture for the mobile robotic application. Key functional components in the compute unit include localisation and mapping for navigation and occupancy information (summarised as the \gls{slam} module), the \gls{gdm} function, and the \gls{ipp} module, which selects the \gls{nbt}. The \gls{ipp} process is expanded in (b), where we also illustrate a gas-polluted environment and the corresponding mean and uncertainty maps produced during \gls{gdm}. The concept of gas frontiers is visualised using yellow markers along the boundary of the estimated gas plume, while conventional frontiers outlining the unknown occupancy map are in red. Both types are used by \gls{xit} to steer and terminate tree expansion in exploration-rich regions, while a \gls{ucb}-based cost function, derived from the current gas model, guides paths through inferred high-value gas areas. A separate example is included to illustrate the characteristic trajectories and tree structure generated by \gls{xit}.}
    \label{fig:overview}
\end{adjustwidth}
\end{figure*}
\subsection{Environment and Gas Representation}
\label{sec:envioronment representation}

We consider the problem in a planar configuration space with unknown, static obstacles. Let the robot’s configuration space be $\mathcal{X} \subset \mathbb{R}^2$, with $\mathcal{X}_{\mathrm{obs}} \subset \mathcal{X}$ denoting obstacle regions and $\mathcal{X}_{\mathrm{free}} := \mathcal{X} \setminus \mathcal{X}_{\mathrm{obs}}$ denoting the true free space. Let $\mathbf{x}_k \in \mathcal{X}_{\mathrm{free}}$ represent the robot’s state at planning time step $k \in \mathbb{N}$. To represent the environment structure, we maintain an occupancy grid map $\mathcal{M}_k^{\mathrm{occ}}$ on a lattice $\mathcal{C}$, where each cell $c \in \mathcal{C}$ has an occupancy belief $p_k(c)$. Each state $\mathbf{x} \in \mathcal{X}$ corresponds to a unique cell $c(\mathbf{x}) \in \mathcal{C}$, which allows us to write map quantities directly as functions of $\mathbf{x}$, e.g., $p_k(\mathbf{x}) := p_k(c(\mathbf{x}))$. The known-free subset at step $k$ is the union of cells whose occupancy probability is below a free threshold $\tau_{\mathrm{free}}$, i.e.,

\begin{equation}
\hat{\mathcal{X}}_{\mathrm{free}}(k) := \bigcup_{c:\, p_k(c) \le \tau_{\mathrm{free}}} c
\subset \mathcal{X}_{\mathrm{free}} .
\label{eq:known-free}
\end{equation}

In this work, planning is restricted to $\hat{\mathcal{X}}_{\mathrm{free}}(k)$; unknown cells are not traversed. The robot pose is assumed to be available, for example from a \gls{slam} solution. Given such a pose estimate, an occupancy grid map can be constructed using LiDAR sensors through a standard inverse sensor model with ray tracing. The uncertainty of the \gls{slam} output is not explicitly propagated through \gls{xit}, as it is assumed to be small relative to the uncertainty associated with gas sensing and plume variability in the considered settings.

Analogous to the occupancy map, the gas distribution $\mathcal{M}_{k}^{\mathrm{gas}}$ is represented by a \gls{gmrf} posterior $(\boldsymbol{\mu}_k, \boldsymbol{\Sigma}_k)$ defined on the same lattice (or a coarser one). This posterior can be obtained by processing gas sensor data collected up to step $k$ using, e.g., direct solvers \cite{Monroy2016} or belief propagation \cite{Rhodes2023STRUC}. The \gls{gmrf} representation further supports per-state statistics, with mean $\mu_k(\mathbf{x})$ and variance $\varepsilon^2_k(\mathbf{x})$ available on $\hat{\mathcal{X}}_{\mathrm{free}}(k)$; cells marked occupied or unknown in $\mathcal{M}_k^{\mathrm{occ}}$ add no variables to the graph, but sever the smoothness couplings that would otherwise cross them.

\subsection{Frontier-based Goal Regions}
\label{sec:frontier-based goal regions}
To enable exploration of both the unknown physical environment and the gas structure, we adopt frontier detection techniques \cite{Keidar2014EfficientFrontier} to identify goal regions for the robot to move toward. The occupancy frontiers $\mathcal{F}_{k}^{\mathrm{occ}}$ denote the set of frontier clusters, where each frontier $f^{\mathrm{occ}} \in \mathcal{F}_{k}^{\mathrm{occ}}$ is a contiguous group of cells on the boundary between known and unknown space in the occupancy map. These regions drive the expansion of the robot’s reachable free space. Inspired by the role of frontiers in guiding environment exploration, we introduce \emph{gas frontiers}, denoted $\mathcal{F}_{k}^{\mathrm{gas}}$, defined analogously as a set of gas-frontier clusters $f^{\mathrm{gas}} \in \mathcal{F}_{k}^{\mathrm{gas}}$, each representing the boundary of a high-concentration gas region adjacent to unobserved gas states. A detailed description of the gas-frontier detection procedure is provided in Section~\ref{sec:gas_frontier}.

The exploration-rich goal set is then
\begin{equation}
\mathcal{G}_k = \mathcal{F}_{k}^{\mathrm{occ}} \cup \mathcal{F}_{k}^{\mathrm{gas}} .
\label{eq:goal-set}
\end{equation}
That is, the collection of all occupancy and gas frontier clusters.

\subsection{Next-best Trajectory Planning}
\label{sec:Next-best trajectory planning}
To formalise the planning problem, we model an admissible trajectory as an absolutely continuous curve 
\[
\sigma:[0,1] \to \hat{\mathcal{X}}_{\mathrm{free}}(k),
\]
starting at the robot's current state, $\sigma(0) = \mathbf{x}_k$, and terminating in one of the frontier-based goal regions, $\sigma(1) \in \mathcal{G}_k$. 
Our goal is to choose a trajectory $\sigma$ from $\mathbf{x}_k$ that enables the robot to efficiently collect observations along the path, improving both the occupancy grid map and the gas distribution model; in particular, expanding known-free space (environment exploration), expanding the critical gas region (gas structure exploration), and reducing gas uncertainty around high-concentration regions (gas exploitation).

Executing a trajectory produces new measurements, and these updates are captured by an online mapping operator
\begin{equation}
\Phi : ({\mathcal{M}_k}^{(\cdot)};\, \sigma) \mapsto {\mathcal{M}_{k+1}}^{(\cdot)},
\label{eq:mapping-operator}
\end{equation}
which performs the occupancy-grid update and the \gls{gmrf} inference based on the data gathered along $\sigma$.

Full non-myopic \gls{ipp} for gas mapping can be cast as a POMDP \cite{Gongora2023}, but requires predicting the coupled evolution of occupancy and gas beliefs and future measurements. This will inevitably introduce substantial computation overhead and model uncertainties in an unknown environment. As a first solution to the active \gls{gdm} problem, our premise is that the robot obtains more reliable information by acting on its current beliefs and re-planning in real time after executing the selected trajectory, rather than relying on non-myopic predictions over uncertain futures. We therefore adopt a myopic, information-aware \gls{nbt} formulation. At each planning iteration $k$, the robot selects the next best trajectory 
\begin{equation}
\sigma_k^*
=
\arg\min_{\sigma \in \Sigma_k}
J_k(\sigma),
\quad
\text{s.t. } 
\sigma(0) = \mathbf{x}_k,\;
\sigma(1) \in \mathcal{G}_k .
\label{eq:nbt-opt}
\end{equation}
Here, the set $\Sigma_k$ contains only collision-free trajectories, and $J_k(\sigma)$, as specified in Section~\ref{subsec:xit_exploitation}, is a dedicated cost function that trades off travel distance against gas-related information and is additive and monotonic over feasible trajectory extensions. Exploration of new space and critical gas structure is induced through the frontier-based goal set $\mathcal{G}_k$. After executing $\sigma_k^*$ in full, i.e., traversing the selected trajectory to its goal rather than a single step of it, the models are updated via the mapping operator $\Phi$ in Eq.~\eqref{eq:mapping-operator}, yielding $\mathcal{M}_{k+1}$, and the optimisation in Eq.~\eqref{eq:nbt-opt} is then resolved at iteration $k{+}1$.

This recursive \gls{nbt} formulation captures the autonomy principle of our system: trajectories are guided through information-rich regions while maintaining progress toward frontier-based goals. The key components of the optimisation in Eq.~\eqref{eq:nbt-opt}, namely the candidate set $\Sigma_k$, the cost function $J_k(\sigma)$, and the frontier-derived goal set $\mathcal{G}_k$, must be designed cooperatively to ensure effective environmental exploration and gas inference within active \gls{gdm}.

\subsection{System Architecture and Components}

Before presenting the algorithms in detail, we provide a high-level overview of the proposed active \gls{gdm} system. As shown in Fig.~\ref{fig:overview}, the framework combines gas sensing, \gls{ogm}, online \gls{gdm}, \gls{ipp} via \gls{xit}, and a path-following module to execute the selected trajectory.

At the core of this architecture is the \gls{ipp} formulation in Eq.~\eqref{eq:nbt-opt}, which determines the next best trajectory based on frontier-derived goal regions and a cost function that promotes gas exploitation while remaining sensitive to travel cost. The associated sample-based solver, \gls{xit}, constructs a single information-guided tree whose branches grow toward all selected goal regions in parallel fashion, expanding only through known-free space and avoiding obstacles.

Unlike \gls{gsl} approaches that face challenges in unknown environments, particularly indoors or under multi-source releases \cite{Francis2022}, \gls{xit} is well suited to active gas inference in unknown environments without requiring assumptions about source number or location. Its tree exploits informative gas regions en route, while frontier-based goal selection promotes gas-structure or environmental exploration at the destination.

\section{\texorpdfstring{Exploration and Exploitation Informed Trees}{Exploration and Exploitation Informed Trees}}

The design of \gls{xit} follows the view of \gls{bit*}~\cite{Gammell2015} as an ordered search over a sample-induced implicit random geometric graph. Different from the single-goal geometric setting typically considered in sampling-based optimal planning, \gls{xit} is adapted to active GDM through information-biased sampling, multiple frontier-derived goal regions, and a cost-to-come update and rewiring mechanism. This allows the graph-optimal form of the algorithm to search the current sample-induced graph until no further cost improvement is available. In the online implementation, the same search structure can be terminated once feasible goal-reaching trajectories are found, or when a planning-time budget is reached.

The following subsections define the notation, describe the \gls{xit} search procedure, specify the design choices on information field and cost function, and discuss the resulting completeness and optimality properties.

\subsection{XIT: Notation}
\label{sec:notation}
The notation used in the formulation of \gls{xit} (Alg.~\ref{AlgX}) is defined in this subsection, whereas the notation introduced in Section~\ref{sec:active-gdm-formulation-system} is carried over where applicable. The \gls{xit} planner is executed at fixed instant $k$ as shown in the active \gls{gdm} formulation in Eq.~\eqref{eq:nbt-opt}. For notational simplicity, we drop the index \(k\) on all quantities except \(\mathbf{x}_k\), which denotes the current robot state. Thus, the known-free space $\hat{\mathcal{X}}_{\mathrm{free}}$, gas map $\mathcal{M}^{\mathrm{gas}}$, and frontier-derived goal regions $\mathcal{G}$ are all treated as fixed during a single planning call.

\gls{xit} operates over a finite sample set \(\mathbf{X}_{\mathrm{s}}\subset \hat{\mathcal{X}}_{\mathrm{free}}\), generated by the informed sampling procedure introduced in Section~\ref{subsec:xit_exploitation}. Together with the root state \(\mathbf{x}_k\), these samples induce an implicit random geometric graph (RGG), where two samples may be connected if their Euclidean distance is no greater than the connection radius \(r_N\). XIT incrementally constructs an explicit search tree $\mathcal{T} := (V,E)$, where \(V\subseteq \mathbf{X}_{\mathrm{s}}\cup\{\mathbf{x}_k\} \) is the set of vertices currently in the tree and \(E=\{(\mathbf{v},\mathbf{w})\}\) is the set of directed parent--child edges. The algorithm maintains a vertex queue \(Q_v\) which stores tree vertices available for expansion. 

The quantity \(g_\tau(\mathbf{x})\) denotes the cost-to-come from the root \(\mathbf{x}_k\) to a state \(\mathbf{x}\) through the current tree, where subscript \(\tau\) denotes the current tree structure \(\mathcal{T}\) for compact notation. States not currently connected to the tree are assigned \(g_\tau(\mathbf{x})=\infty\), while \(g_\tau(\mathbf{x}_k)=0\). Let \(g(\mathbf{x})\) denote the optimal cost-to-come to \(\mathbf{x}\) in the underlying sample-induced graph. Then, by construction, \(g_\tau(\mathbf{x})\ge g(\mathbf{x})\) for all \(\mathbf{x}\in \mathbf{X}_{\mathrm{s}}\cup\{\mathbf{x}_k\}\). For an edge \((\mathbf{x},\mathbf{y})\), \(c(\mathbf{x},\mathbf{y})\) denotes the true edge cost and \(\hat c(\mathbf{x},\mathbf{y})\) denotes an admissible lower-bound estimate, satisfying
\[
\hat c(\mathbf{x},\mathbf{y})\le c(\mathbf{x},\mathbf{y})\le \infty,
\]
where edges intersecting occupied or unknown cells in the occupancy map are assigned infinite cost. We will specify the cost design, which combines gas-related information with travel effort, in Section~\ref{subsec:xit_exploitation}.

The exploration module \(\Gamma_f\) encapsulates the frontier construction introduced in Section~\ref{sec:frontier-based goal regions}. In this work, \(\Gamma_f\) returns both occupancy frontiers and gas frontiers. Occupancy frontiers are obtained using standard frontier-based exploration techniques such as fast frontier detection (FFD)~\cite{Keidar2014EfficientFrontier}, whereas gas frontiers are extracted using the WGFD procedure developed later in Section~\ref{sec:gas_frontier}. For planning, the \(j\)-th frontier-derived target is represented by a nominal goal state \(\mathbf{x}_{\mathrm{goal}}^{(j)}\). For theoretical analysis, let \( \mathcal{X}_{\mathrm{goal}}^{(j)} \subset \hat{\mathcal{X}}_{\mathrm{free}} \) denote the associated continuous target neighbourhood, and let \( \mathbf{X}_{\mathrm{goal}}^{(j)} := \mathcal{X}_{\mathrm{goal}}^{(j)} \cap \mathbf{X}_{\mathrm{s}} \) denote its sample-based representation used by Algorithm~\ref{AlgX}. The frontier-derived goal domain is 
\begin{equation}
    \label{eq:goalset} 
    \mathcal{G} := \bigcup_{j=1}^{M} \mathcal{X}_{\mathrm{goal}}^{(j)}. 
\end{equation} 
Thus, \(\mathcal{X}_{\mathrm{goal}}^{(j)}\) denotes a continuous target region, whereas \(\mathbf{X}_{\mathrm{goal}}^{(j)}\) denotes the finite set of sampled states in that region.

Finally, for each goal region \(\mathbf{X}_{\mathrm{goal}}^{(j)}\), \gls{xit} maintains an incumbent terminal vertex \(\mathbf{x}_j^\star\) and an incumbent cost \(c_j\). These quantities are initialised in Alg.~\ref{AlgX} and updated during the search. Their operational role, including incumbent updates and path extraction, is described in Section~\ref{subsec:xit_algorithm}.

\subsection{Algorithm formulation}
\label{subsec:xit_algorithm}

The \gls{xit} algorithm is detailed in Alg.~\ref{AlgX}, which can be divided into an initialisation phase and a multi-direction tree expansion phase. Essentially, it performs an ordered label-updating search over the sample-induced implicit RGG to build paths toward multiple goal regions.

\begin{algorithm}
\label{AlgX}
\caption{\gls{xit}: graph-optimal informed tree search}

\KwIn{Sample size \(N\), state \(\mathbf{x}_k\), frontier module \(\Gamma_f\), information field \(\mathcal I\)}
\KwOut{Tree \(\mathcal{T} := (V,E)\) and paths \(\{\sigma_j^\star\}_{j=1}^{M}\)}

\tcp{Initialisation} 

\(\mathbf{X}_{\mathrm{s}}\leftarrow \mathrm{InformedSample}(N,\mathcal I)\)\;
\(\{\mathbf{X}_{\mathrm{goal}}^{(j)}\}_{j=1}^{M}
\leftarrow \mathrm{GoalSets}(\Gamma_f(),\mathbf{X}_{\mathrm{s}})\)\;

\(V\leftarrow\{\mathbf{x}_k\}\), \(E\leftarrow\emptyset\), \(Q_v\leftarrow\{\mathbf{x}_k\}\)\;
\(g_\tau(\mathbf{x}_k)\leftarrow0\),
\(g_\tau(\mathbf{x})\leftarrow\infty,\ \forall\mathbf{x}\in\mathbf{X}_{\mathrm{s}}\)\;

\For{\(j=1,\ldots,M\)}{
    \(c_j\leftarrow\infty\), \(\mathbf{x}_j^\star\leftarrow\emptyset\)\;
}

\(r_N\leftarrow
\gamma\left(\log|\mathbf{X}_{\mathrm{s}}|/|\mathbf{X}_{\mathrm{s}}|\right)^{1/n}\)\;

\(\kappa_v^j(\mathbf{v}) :=
g_\tau(\mathbf{v})+\hat h_j(\mathbf{v})\)\;

\tcp{Tree Expansion Procedure} 

\While{\(Q_v\neq\emptyset\)}{

    \(j_m\leftarrow\mathrm{NextGoal}_{\mathrm{RR}}(1,\ldots,M)\)\;
    \(\mathbf{v}_m\leftarrow\mathrm{PopBest}(Q_v,\kappa_v^{j_m})\)\;

    \(V_{\mathrm{near}}\leftarrow
    \{\mathbf{w}\in\mathbf{X}_{\mathrm{s}}
    \mid \mathbf{w}\neq\mathbf{v}_m,\ 
    \|\mathbf{w}-\mathbf{v}_m\|_2\le r_N\}\)\;

    \For{\(\mathbf{w}_m\in V_{\mathrm{near}}\)}{

        \If{\(g_\tau(\mathbf{v}_m)+\hat c(\mathbf{v}_m,\mathbf{w}_m)
        < g_\tau(\mathbf{w}_m)\)}{

            \(\bar c\leftarrow
            g_\tau(\mathbf{v}_m)+c(\mathbf{v}_m,\mathbf{w}_m)\)\;

            \If{\(\bar c<g_\tau(\mathbf{w}_m)\)}{

                \If{\(\mathbf{w}_m\notin V\)}{
                    \(V\leftarrow V\cup\{\mathbf{w}_m\}\)\;
                }
                \Else{
                    \(E\leftarrow
                    E\setminus\{(\mathrm{Parent}(\mathbf{w}_m),\mathbf{w}_m)\}\)\;
                }

                \(E\leftarrow E\cup\{(\mathbf{v}_m,\mathbf{w}_m)\}\)\;
                \(g_\tau(\mathbf{w}_m)\leftarrow\bar c\)\;
                \(Q_v\leftarrow Q_v\cup\{\mathbf{w}_m\}\)\;

                \For{\(j=1,\ldots,M\)}{
                    \If{\(\mathbf{w}_m\in\mathbf{X}_{\mathrm{goal}}^{(j)}
                    \textbf{ and } g_\tau(\mathbf{w}_m)<c_j\)}{
                        \(c_j\leftarrow g_\tau(\mathbf{w}_m)\)\;
                        \(\mathbf{x}_j^\star\leftarrow\mathbf{w}_m\)\;
                    }
                }
            }
        }
    }
}
\tcp{Path extraction}
\For{\(j=1,\ldots,M\)}{
    \(\sigma_j^\star\leftarrow
    \mathrm{ExtractPath}(\mathbf{x}_k,\mathbf{x}_j^\star,E)\)\;
}

\Return{\(\mathcal{T}=(V,E),\{\sigma_j^\star\}_{j=1}^{M}\)}\;
\end{algorithm}

\subsubsection{Initialisation (Alg.~\ref{AlgX}, Lines 1:8)}

Alg.~\ref{AlgX} begins by generating a finite sample set \(\mathbf{X}_{\mathrm{s}}\) using the informed sampling procedure described in Section~\ref{subsec:xit_exploitation}. The frontier module \(\Gamma_f\) is then invoked in the $\mathrm{GoalSets}$ function to return a sample-based goal set, for example, by selecting the \(k_n\) nearest samples in \(\mathbf{X}_{\mathrm{s}}\) to \(\mathbf{x}_{\mathrm{goal}}^{(j)}\). The search tree is initialised at the current robot state \(\mathbf{x}_k\), with \(V=\{\mathbf{x}_k\}\), \(E=\emptyset\), and \(Q_v=\{\mathbf{x}_k\}\). The root cost is set to \(g_\tau(\mathbf{x}_k)=0\), while all sampled states are assigned \(g_\tau(\mathbf{x})=\infty\). For each goal region, the incumbent cost and an incumbent terminal vertex are initialised as \(c_j=\infty\) and \(\mathbf{x}_j^\star=\emptyset\), respectively. 

To support graph search operation, the goal-specific vertex ordering criterion is defined as \[ \kappa_v^j(\mathbf{v})= g_\tau(\mathbf{v})+\hat h_j(\mathbf{v}), \] where \(\hat h_j(\mathbf{v})\) is an admissible lower bound from \(\mathbf{v}\) to the goal region \(\mathbf{X}_{\mathrm{goal}}^{(j)}\). The connection radius $r_N$ for edge search follows the standard RGG scaling \cite{Karaman2011IJRR,Gammell2015}, \[ r_N= \gamma\left( \frac{\log |\mathbf{X}_{\mathrm{s}}|} {|\mathbf{X}_{\mathrm{s}}|} \right)^{1/n}, \] where the sufficient choice of \(\gamma\) for the mixture sampling density is specified in Section~\ref{subsec:algorithm properties}.

\subsubsection{Multi-direction Tree Expansion Procedure (Alg.~\ref{AlgX}, Lines 9:27)}

The main loop for tree expansion continues while the vertex queue \(Q_v\) is non-empty. At each iteration, the function \(\mathrm{NextGoal}_{\mathrm{RR}}\) (Alg.~\ref{AlgX}, Line 10) selects a goal index \(j_m\) using a round-robin schedule over the \(M\) goal regions. This scheduler prevents the search from being driven only by the nearest or easiest frontier goal and encourages quasi-simultaneous progress toward multiple frontier-derived destinations.

For the selected goal \(j_m\), the operation \(\mathrm{PopBest}(Q,\kappa)\) removes from \(Q_v\) the vertex with the smallest value of \(\kappa_v^{j_m}\). This gives the expansion vertex \(\mathbf{v}_m\). The neighbouring samples within the RGG connection radius are then collected as \[ V_{\mathrm{near}} = \left\{ \mathbf{w}\in\mathbf{X}_{\mathrm{s}} \mid \mathbf{w}\neq\mathbf{v}_m,\; \|\mathbf{w}-\mathbf{v}_m\|_2\le r_N \right\}. \] Although \(\mathbf{v}_m\) is selected using the ordering criterion of the scheduled goal \(j_m\), all neighbouring samples \(\mathbf{w}_m \in V_{\mathrm{near}}\) are tested for possible cost-to-come improvement (Alg.~\ref{AlgX}, Lines 13:23). Therefore, the expansion of \(\mathbf{v}_m\) is not restricted to the scheduled goal. It may improve paths to any goal region through the shared tree.

Specifically, for each neighbouring sample \(\mathbf{w}_m\), XIT first evaluates the admissible lower-bound condition \[ g_\tau(\mathbf{v}_m)+\hat c(\mathbf{v}_m,\mathbf{w}_m) < g_\tau(\mathbf{w}_m). \] If this condition is not satisfied, the candidate edge cannot improve the current cost-to-come of \(\mathbf{w}_m\), and the true edge cost does not need to be evaluated. Otherwise, the true edge cost \(c(\mathbf{v}_m,\mathbf{w}_m)\) is computed. This true cost includes collision checking and the information-aware edge cost introduced in Section~\ref{subsec:xit_exploitation}. If \( g_\tau(\mathbf{v}_m) + c(\mathbf{v}_m,\mathbf{w}_m) < g_\tau(\mathbf{w}_m) \), then the candidate edge provides a lower-cost route to \(\mathbf{w}_m\).

When such an improvement is found, the search tree is either expanded or rewired (Alg.~\ref{AlgX}, Lines 16:23). If \(\mathbf{w}_m\) is not already in the tree, it is inserted into \(V\). If it is already in the tree, its previous parent edge is removed from \(E\), so that it can be rewired to the expansion vertex $\mathbf{v}_m$. In both cases, the new parent edge \((\mathbf{v}_m,\mathbf{w}_m)\) is added to \(E\), the cost-to-come label \(g_\tau(\mathbf{w}_m)\) is updated, and \(\mathbf{w}_m\) is inserted into \(Q_v\). Reinserting \(\mathbf{w}_m\) ensures that its outgoing neighbourhood can be reconsidered under the improved cost-to-come value. This update-and-reinsert step is related in spirit to incremental graph-search methods such as LPA*~\cite{KOENIG200493}, but is used here within an informed sampling-based planner over an implicit RGG. It is particularly important in XIT because the algorithm expands local neighbourhoods from queued vertices, rather than maintaining a persistent global edge queue as in BIT*.

After each successful label update, XIT checks whether \(\mathbf{w}_m\) belongs to any goal region (Alg.~\ref{AlgX}, Lines 25:27). If \( \mathbf{w}_m \in \mathbf{X}_{\mathrm{goal}}^{(j)}\) and its updated cost-to-come is lower than the current incumbent \(c_j\), then \(c_j\) and \(\mathbf{x}_j^\star\) are updated. Thus, the algorithm explicitly records the best currently known terminal vertex for each frontier-derived goal region. Once \(Q_v\) is empty, paths \(\{\sigma_j^\star\}_{j=1}^{M}\) are extracted by tracing parent edges from each incumbent terminal vertex \(\mathbf{x}_j^\star\) back to the root \(\mathbf{x}_k\).

Algorithm~\ref{AlgX} is stated in its graph-optimal form, with the search continuing until \(Q_v = \emptyset \), which underpins the analysis in Section~\ref{subsec:algorithm properties}. For online active GDM, there is an option to use a feasibility-first variant that terminates once every goal region has a finite incumbent, i.e., \( c_j<\infty\), \(\forall j=1,\ldots,M \); alternatively, a time-budgeted variant returns the best incumbent paths available when the planning budget is reached. These variants retain \gls{xit}’s informed sampling, multi-goal scheduling, and label-updating structure while maintaining real-time responsiveness in NBT based active GDM.

\subsection{Information Field and Cost Design for Gas Exploitation}
\label{subsec:xit_exploitation}

\begin{figure*}[!ht]
\centering
\includegraphics[width=0.85\textwidth]{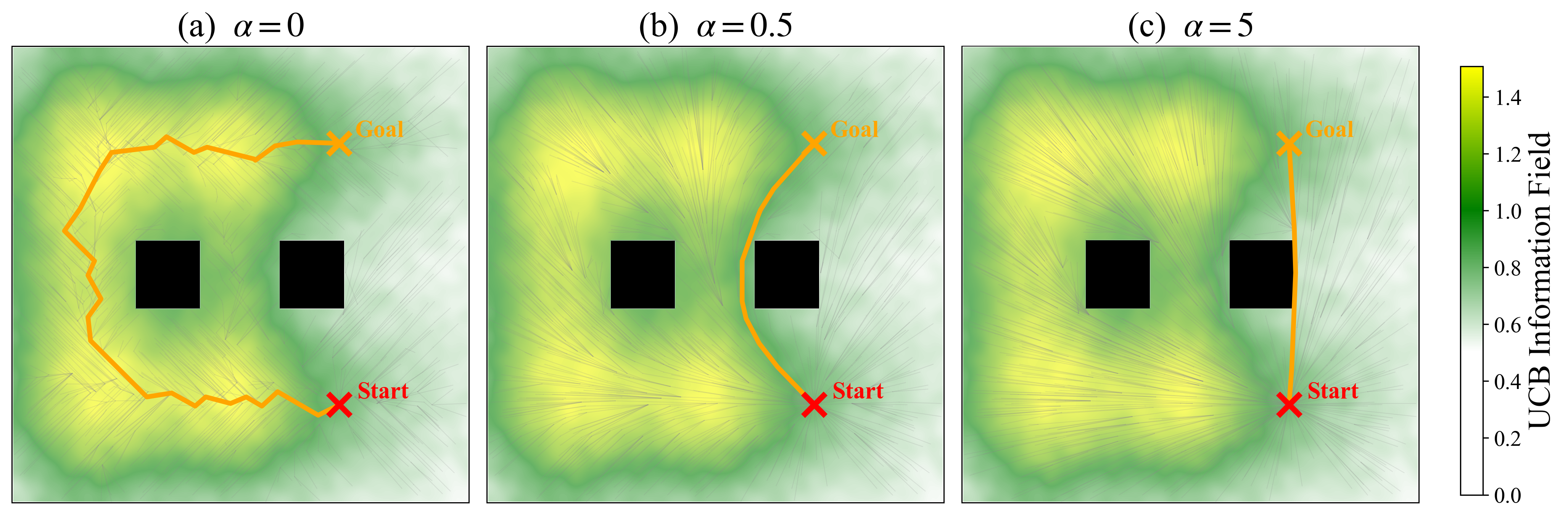}
\caption{
Effect of the distance weight \(\alpha\) on an isolated \gls{xit} trajectory to a single goal, for \(\alpha = 0\) (a), \(0.5\) (b), and \(5\) (c). All panels use the same scenario, start, goal, and sample set, so the differing routes are due to \(\alpha\) alone. A large sample budget with full graph-optimal expansion is used to show the converged routes: at \(\alpha = 0\) the cost is purely the information penalty and the path detours through high-\gls{ucb} regions, whereas larger \(\alpha\) favours the shorter geometric route. Colour shows the \gls{ucb} information field; grey lines are tree edges and orange the selected trajectory.
}
\label{fig:alpha_effect}
\end{figure*}

This subsection specifies the design of the information field, informed sampling mechanism, cost formulation, and associated admissible heuristics used by Algorithm~\ref{AlgX}, while the exploration module that provides the goal regions is presented separately in Section~\ref{sec:gas_frontier}.

\paragraph*{Information field}

At planning step \(k\), the GMRF gas map $\mathcal{M}^{\mathrm{gas}}$ provides the posterior mean \(\mu(x)\) and marginal variance \(\varepsilon^2(x)\) for each cell \(x\in\hat{\mathcal X}_{\mathrm{free}}\), as introduced in Section~\ref{sec:envioronment representation}. We define the \gls{ucb}-style information field as
\begin{equation}
    \mathcal{I}(\mathbf{x}) = \tilde{\mu}(\mathbf{x}) + \beta\,\varepsilon(\mathbf{x}),
\label{eq:ucb-field}
\end{equation}
where the parameter $\beta \ge 0$ balances between exploiting high concentration and exploring regions of high uncertainty in the gas map. 
The normalised mean concentration is defined as
\begin{equation}
\tilde{\mu}(\mathbf{x}) =
\frac{\mu(\mathbf{x})}
{\max_{\mathbf{z} \in \hat{\mathcal{X}}_{\mathrm{free}}} \mu(\mathbf{z}) + \delta_\mu},
\end{equation}
where $\delta_\mu > 0$, so the denominator is strictly positive. 
Similarly, we normalise \(\mathcal{I}\) over
\(\hat{\mathcal{X}}_{\mathrm{free}}\) as
\begin{equation}
\hat{\mathcal{I}}(\mathbf{x})
= \frac{\mathcal{I}(\mathbf{x}) - \mathcal{I}_{\min}}
{\mathcal{I}_{\max} - \mathcal{I}_{\min} + \delta_I}, \\
\label{eq:normalise}
\end{equation}
where \[ \mathcal I_{\min}=\min_{\mathbf z\in\hat{\mathcal X}_{\mathrm{free}}}\mathcal I(\mathbf z), \qquad \mathcal I_{\max}=\max_{\mathbf z\in\hat{\mathcal X}_{\mathrm{free}}}\mathcal I(\mathbf z), \] and \(\delta_I>0\) avoids division by zero when the field value is spatially uniform. The bounded field \(\hat{\mathcal I}(\mathbf{x})\in[0,1)\) is used as the information field in XIT for cost or utility calculation. Note that the UCB information field is constructed from the current GMRF posterior and does not explicitly separate gas variability from uncertainty arising from limited knowledge of the distribution. GDM formulations that provide a separate model-confidence measure, such as Kernel DM approaches \cite{Lilienthal2009}, could supply or augment the uncertainty component of the information field.

\paragraph*{Informed sampling density}

The information field \(\hat{\mathcal I}\) is also used to construct an informed sampling distribution used in the function \(\mathrm{InformedSample}\) (Alg.~\ref{AlgX}, Line 1). Let \begin{equation} q_{\mathcal I}(x)= \frac{\hat{\mathcal I}(x)} {\int_{\hat{\mathcal X}_{\mathrm{free}}}\hat{\mathcal I}(\mathbf z)\,d\mathbf z} \label{eq:importance_sampling_density} \end{equation} denote the information-biased density induced by \(\hat{\mathcal I}\), when the denominator is non-zero. Sampling directly from \(q_{\mathcal I}\) concentrates samples in regions with high predicted gas concentration or high uncertainty, but may undersample low-information regions that are still required for connectivity. Therefore, the implemented sampler uses the mixture density 
\begin{equation} 
q(x)= (1-\eta)q_{\mathcal I}(x) + \eta\frac{1}{\lambda(\hat{\mathcal X}_{\mathrm{free}})}, \qquad \eta\in(0,1), 
\label{eq:mixture_sampling_density} 
\end{equation} 
where \(\eta\) is the uniform sampling ratio and \(\lambda(\hat{\mathcal X}_{\mathrm{free}})\) denotes the measure of the current known-free space. For the theoretical analysis in Section~\ref{subsec:algorithm properties}, the sample set is modelled as \(N\) i.i.d.\ draws from the continuous density \(q(x)\). In implementation, \(\mathrm{InformedSample}\) selects \(N\) distinct known-free grid cells according to the discrete counterpart of \(q(x)\), while occupied cells, unknown cells, and the current robot cell are excluded. If \(\hat{\mathcal I}\) is identically zero, the sampler defaults to the uniform component.

\paragraph*{Edge cost function and heuristics}
The normalised information field \(\hat{\mathcal I}\) in \eqref{eq:normalise} is used not only to bias sampling, but also to guide trajectories to pass through gas-relevant regions. In particular, the \gls{ucb} based information field couples mean and variance, so the planner favours high concentration areas to reduce uncertainty where it matters, i.e., enables the exploitation of the critical gas regions. Since \gls{xit} minimises path cost, we define the information penalty as 
\begin{equation} 
P(\mathbf{x})=1-\hat{\mathcal I}(\mathbf{x}), 
\label{eq:info-pen} 
\end{equation} 
such that \(0< P(\mathbf{x})\leq1\). Thus, regions with high UCB incur a lower traversal penalty.

For a candidate local connection \((\mathbf{x},\mathbf{y})\), let \(\mathcal{S}(\mathbf{x},\mathbf{y})\) denote the set of grid cells intersected by the line segment joining \(\mathbf{x}\) and \(\mathbf{y}\). The edge cost is \begin{equation} 
c(\mathbf{x},\mathbf{y})= \sum_{\mathbf{p}\in\mathcal{S}(\mathbf{x},\mathbf{y})} P(\mathbf{p}) +\alpha\|\mathbf{y}-\mathbf{x}\|_2, 
\label{eq:edge-cost} 
\end{equation} 
where \(\alpha \geq 0\) weights travel distance against the accumulated information penalty. Smaller values of \(\alpha\) favour longer paths through more informative regions, whereas larger values increasingly favour shorter geometric paths (Fig.~\ref{fig:alpha_effect}). An edge is infeasible, with \(c(\mathbf{x},\mathbf{y})=\infty\), if any intersected cell lies outside the currently known-free space. Note that for the theoretical analysis, the formulation in \eqref{eq:edge-cost} is interpreted as the map-resolution discretisation of the continuous line cost obtained by integrating the information penalty along the segment joining \(\mathbf{x}\) and \(\mathbf{y}\).

Because the information penalty is non-negative, the distance term provides the lower-bound edge cost \begin{equation} 
\hat c(\mathbf{x},\mathbf{y}) = \alpha\|\mathbf{y}-\mathbf{x}\|_2 \leq c(\mathbf{x},\mathbf{y}), \label{eq:edge-cost-lower-bound} 
\end{equation} 
which is used in Algorithm~\ref{AlgX}, Line 14, to screen candidate edges before evaluating their full cost. Accordingly, the goal-specific heuristic is defined as 
\begin{equation} 
\hat h_j(\mathbf{x})= \alpha\,\operatorname{dist} \left(\mathbf{x},\mathcal{X}_{\mathrm{goal}}^{(j)}\right) = \alpha \min_{\mathbf{y}\in\mathcal{X}_{\mathrm{goal}}^{(j)}} \|\mathbf{y}-\mathbf{x}\|_2. 
\label{eq:goal-heuristic} 
\end{equation} 
Since it neglects the non-negative information penalty and any obstacle-induced detour, \(\hat h_j\) does not overestimate the remaining path cost.

\paragraph*{Next-best trajectory cost}
For a discrete collision-free trajectory \(\sigma=[\mathbf{x}_0,\ldots,\mathbf{x}_H]\in\Sigma\), the trajectory cost in Eq.~\eqref{eq:nbt-opt} is computed by summing the edge costs defined in Eq.~\eqref{eq:edge-cost} along the trajectory: \begin{equation} J(\sigma) = \sum_{i=0}^{H-1} c(\mathbf{x}_i,\mathbf{x}_{i+1}). \label{eq:nbt2} \end{equation} The cost is therefore additive over consecutive trajectory edges and non-decreasing under feasible trajectory extension. The \gls{xit} planner using such a cost balances travel distance against the preference for paths through informative gas regions, while the frontier-derived goal regions provide progress toward unexplored space or unresolved gas structure.

\subsection{Algorithm Properties}
\label{subsec:algorithm properties}

This subsection establishes properties of the graph-optimal form of the \gls{xit} Algorithm~\ref{AlgX}. The analysis considers one fixed planning cycle, which means that the known-free space $\hat{\mathcal{X}}_{\mathrm{free}}$, information field $\hat{\mathcal{I}}$, and frontier-derived target regions $\mathcal{G}$, are fixed, whereas XIT performs an ordered label-updating search over the sample-induced implicit RGG~\cite{Karaman2011IJRR}. We establish its theoretical properties in three stages. First, Proposition~\ref{prop:fixed_graph_optimality} shows that, when Algorithm~\ref{AlgX} is run until \(Q_v=\emptyset\), the resulting tree contains the minimum-cost paths on the current induced graph. Second, Theorem~\ref{thm:pc} uses the non-zero support of the mixture sampler and standard RGG coverage arguments to establish probabilistic completeness. Finally, Theorem~\ref{thm:ao} combines graph optimality with standard RGG approximation results to establish almost-sure asymptotic optimality as the sample size $N$ increases.

For the completeness and optimality statements, samples are modelled as independent draws from the continuous mixture density \(q\) in \eqref{eq:mixture_sampling_density}. Its uniform component gives the lower density bound, i.e.,
\begin{equation} 
q(\mathbf{x})\geq q_{\min}:= \frac{\eta} {\lambda(\hat{\mathcal{X}}_{\mathrm{free}})} >0, \qquad \forall\mathbf{x}\in\hat{\mathcal{X}}_{\mathrm{free}}. 
\label{eq:minimum_sampling_density} 
\end{equation} 
Let \(\xi_n\) denote the Lebesgue measure of the unit ball in \(\mathbb{R}^n\). A conservative sufficient coefficient for the radius \(r_N\) in Algorithm~\ref{AlgX} is \begin{equation} \gamma>\gamma_q:= 2\left(1+\frac{1}{n}\right)^{1/n} \left(\frac{1}{q_{\min}\xi_n}\right)^{1/n}. \label{eq:mixture_rgg_threshold} \end{equation} Equivalently, \(\gamma_q=\eta^{-1/n}\gamma_{\mathrm{RGG}}\), where \(\gamma_{\mathrm{RGG}}\) is the corresponding uniform-sampling coefficient. This condition is conservative because it uses only the minimum sampling density contributed by the uniform component and ignores the additional probability mass assigned by the information-biased term.

\begin{proposition}[Optimality on a fixed sample-induced graph]
\label{prop:fixed_graph_optimality}
Consider a fixed sample set \(\mathbf{X}_{\mathrm{s}}\) of size \(N\). Let \(G_N=(V_N,E_N)\) denote the collision-free radius graph induced by \(\mathbf{X}_{\mathrm{s}}\), where $ V_N = \{\mathbf{x}_k\} \cup \mathbf{X}_{\mathrm{s}} $, and \(E_N\) contains all collision-free local connections between states in \(V_N\) whose Euclidean length does not exceed \(r_N\). Thus, \(G_N\) is the collision-free subgraph of the
sample-induced RGG.

For a sampled goal set \(\mathbf{X}_{\mathrm{goal}}^{(j)}\), let \( J_{G_N,j}^{\star} \) denote the minimum cost of a path in \(G_N\) from robot state \(\mathbf{x}_k\) to any reachable state in \( \mathbf{X}_{\mathrm{goal}}^{(j)} \), with \(J_{G_N,j}^{\star}=\infty\) if no such path exists.

Suppose that Algorithm~\ref{AlgX} is run until \(Q_v=\emptyset\), and that all feasible local connections from each expanded vertex are examined. Then, for each sampled goal set containing at least one
reachable state,
\begin{equation}
J(\sigma_{N,j}^{\star})
=
J_{G_N,j}^{\star}.
\label{eq:fixed_graph_optimality}
\end{equation}
\end{proposition}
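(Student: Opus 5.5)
The argument is a label-correcting (Bellman–Ford / LPA*-style) fixed-point argument, and it does not depend on the pop order.

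\textbf{Step 1 (invariants).} I would first record three invariants of Algorithm~\ref{AlgX} that hold throughout the run.
\begin{itemize}
\item[(i)] For every $\mathbf{w}\in V\setminus\{\mathbf{x}_k\}$, the parent edge $(\mathrm{Parent}(\mathbf{w}),\mathbf{w})$ belongs to $E_N$. Its cost is finite, so it is collision-free, and it was drawn from $V_{\mathrm{near}}$, so its length is at most $r_N$.
\item[(ii)] $g_\tau$ is non-increasing in time, and $g_\tau(\mathbf{w})\ge g_\tau(\mathrm{Parent}(\mathbf{w}))+c(\mathrm{Parent}(\mathbf{w}),\mathbf{w})$. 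Equality holds at the moment of assignment; afterwards the parent's label can only decrease.
\item[(iii)] $g_\tau(\mathbf{x})\ge g(\mathbf{x}):=$ the graph-optimal cost-to-come. This follows by induction, since every finite label is the cost of an actual walk in $G_N$.
\end{itemize}
The parent pointers form a tree: a cycle cannot arise, because costs are strictly positive (e.g.\ $c\ge P>0$ on each intersected cell). Summing the inequalities in (ii) around a hypothetical cycle would then give a contradiction. Consequently, $\mathrm{ExtractPath}$ from $\mathbf{x}_j^\star$ terminates at the root, and it returns a path $\sigma$ in $G_N$ with $J(\sigma)\le g_\tau(\mathbf{x}_j^\star)=c_j$.

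\textbf{Step 2 (termination).} Each insertion into $Q_v$ follows a strict decrease of some label. Each finite label equals the cost of a walk in $G_N$, and after pruning cycles it is a simple-path cost (a cycle only adds positive cost). There are finitely many simple paths, so only finitely many label values are possible. Hence only finitely many decreases occur, and the loop terminates with $Q_v=\emptyset$.

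\textbf{Step 3 (local consistency at termination).} This is the key step. I claim that at termination every edge $(\mathbf{v},\mathbf{w})\in E_N$ with $g_\tau(\mathbf{v})<\infty$ satisfies $g_\tau(\mathbf{w})\le g_\tau(\mathbf{v})+c(\mathbf{v},\mathbf{w})$. To see this, consider the last time $\mathbf{v}$'s label changed. Right then $\mathbf{v}$ was inserted into $Q_v$, and it must later have been popped and expanded with that same final label. During that expansion, $\mathbf{w}\in V_{\mathrm{near}}$ was examined; the hypothesis that all feasible connections are examined is used here. If the screening test on Line~14 failed, then $g_\tau(\mathbf{w})\le g_\tau(\mathbf{v})+\hat c\le g_\tau(\mathbf{v})+c$. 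Otherwise, the true-cost test either updated $g_\tau(\mathbf{w})$ to $g_\tau(\mathbf{v})+c$ or found it already no larger. Labels never increase afterwards, so the inequality persists.

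The root $\mathbf{x}_k$ needs a little care. Its label is $0$, it can never be lowered, and it is popped first. Care is also needed because $V_{\mathrm{near}}$ ranges over $\mathbf{X}_{\mathrm{s}}$ only, but edges into the root are irrelevant since $g(\mathbf{x}_k)=0$.

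\textbf{Step 4 (conclusion).} Take an optimal path $\mathbf{x}_k=\mathbf{y}_0,\dots,\mathbf{y}_H\in\mathbf{X}^{(j)}_{\mathrm{goal}}$ in $G_N$. Inducting along it with Step~3 gives $g_\tau(\mathbf{y}_i)\le\sum_{l<i}c(\mathbf{y}_l,\mathbf{y}_{l+1})$, so $g_\tau(\mathbf{y}_H)\le J^\star_{G_N,j}$. Combined with (iii), $g_\tau=g$ at every goal vertex.

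Next, the incumbent update on Lines~25--27 fires at every label decrease of a goal vertex. Therefore $c_j=\min_{\mathbf{w}\in\mathbf{X}^{(j)}_{\mathrm{goal}}}g_\tau(\mathbf{w})=J^\star_{G_N,j}$.

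Finally, the path extracted at the end has $J\le c_j$ by Step~1. Its final label may have decreased after recording, but then $c_j$ would have been updated again, so $c_j$ is current. The extracted path also lies in $G_N$, so $J\ge J^\star_{G_N,j}$, and equality follows.

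\textbf{Main obstacle.} I expect the main obstacle to be Step~3 together with the bookkeeping in Step~1 showing that the stored tree path realises the label. The point is that rewiring can lower an ancestor's label after a descendant was set, so one only has the inequality in (ii), not equality. This is harmless because the inequality goes in the right direction, but it should be stated explicitly.
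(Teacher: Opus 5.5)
Your proposal is correct and takes the same route as the paper. The paper also treats XIT as a label-correcting search: it notes that the admissible screening $\hat c\le c$ never discards an improving edge, obtains termination from finiteness of $G_N$ and strictly positive edge costs, and then cites the standard label-correcting correctness result (Bertsekas). Your Steps~3--4 prove that cited result directly and also handle the incumbent and path-extraction bookkeeping that the paper leaves implicit.

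One small repair is needed in Step~2. Pruning cycles from a walk yields a cheaper simple path; it does not show that the label itself equals a simple-path cost. A correct version: each vertex's labels are bounded by its first finite value $B$, and every edge costs at least $P_{\min}>0$. Hence any walk realising a label has at most $B/P_{\min}$ edges, so only finitely many walks, and therefore finitely many label values, are possible. Alternatively, show directly that a strictly improving update can never close a cycle in the walk realising a label.
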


\begin{IEEEproof}
For a fixed sample set, Algorithm~\ref{AlgX} acts as a heuristic-ordered label-correcting shortest-path search over \(G_N\). Each finite value of \(g_\tau(\mathbf{x})\) is the cost of a feasible graph path from \(\mathbf{x}_k\) to \(\mathbf{x}\). Whenever an edge \((\mathbf{u},\mathbf{v})\) strictly improves the label of \(\mathbf{v}\), the tree is updated and \(\mathbf{v}\) is inserted into \(Q_v\), so that its outgoing local connections are reconsidered.

The lower-bound screening step does not exclude any improving edge. 
Indeed, if
\[
g_\tau(\mathbf{u})+c(\mathbf{u},\mathbf{v})
<
g_\tau(\mathbf{v}),
\]
then, since
\(\hat c(\mathbf{u},\mathbf{v})\leq c(\mathbf{u},\mathbf{v})\),
\[
g_\tau(\mathbf{u})+\hat c(\mathbf{u},\mathbf{v})
<
g_\tau(\mathbf{v}),
\]
and the edge passes the screening test in Algorithm~\ref{AlgX}.
The subsequent exact-cost test therefore performs the required label
correction.

Since \(G_N\) is finite and every feasible edge has strictly positive cost because \(P(\mathbf{x})\geq P_{\min}>0\), exhaustive execution terminates after finitely many strict label reductions. At termination, \(Q_v=\emptyset\), so no feasible local connection can further reduce a cost-to-come label.
The standard correctness result for label-correcting shortest-path methods then gives the shortest-path labels on \(G_N\)~\cite{Bertsekas1993}. Taking the minimum over the reachable states in \( \mathbf{X}_{\mathrm{goal}}^{(j)} \) yields Eq.~\eqref{eq:fixed_graph_optimality}.
\end{IEEEproof}

\begin{theorem}[Probabilistic completeness]
\label{thm:pc}
Consider the graph-optimal form of Algorithm~\ref{AlgX}, run until \(Q_v=\emptyset\) under the assumptions of Proposition~\ref{prop:fixed_graph_optimality}. Let \(\mathcal{X}_{\mathrm{goal}}^{(j)}\) be a frontier-derived target region with non-empty interior. Suppose that there exists a \(\delta\)-clear collision-free curve from \(\mathbf{x}_k\) to a point in \(\operatorname{int}(\mathcal{X}_{\mathrm{goal}}^{(j)})\).
Let \(\mathbf{X}_{\mathrm{s}}\) contain \(N\) independent samples drawn from the mixture density in
Eq.~\eqref{eq:mixture_sampling_density}, and let 
\[
r_N=
\gamma\left(\frac{\log N}{N}\right)^{1/n},
\qquad
\gamma>\gamma_q,
\]
where \(\gamma_q\) is defined in Eq.~\eqref{eq:mixture_rgg_threshold}. Then,
\begin{equation}
\lim_{N\rightarrow\infty} \Pr\!\left( \begin{array}{c} \exists\,\sigma_{N,j}\subset\hat{\mathcal{X}}_{\mathrm{free}}: \ \sigma_{N,j}(0)=\mathbf{x}_k,\\ \sigma_{N,j}(1)\in\mathbf{X}_{\mathrm{goal}}^{(j)} \end{array} \right) =1.
\label{eq:pc_xit}
\end{equation}
\end{theorem}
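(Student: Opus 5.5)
Our plan is to reduce probabilistic completeness to a connectivity statement about the collision-free radius graph $G_N$ of Proposition~\ref{prop:fixed_graph_optimality}, and then obtain that connectivity from a ball-covering argument along the $\delta$-clear curve. Proposition~\ref{prop:fixed_graph_optimality} shows that the exhaustive search labels every vertex reachable from $\mathbf{x}_k$ in $G_N$ with a finite cost-to-come. Indeed, every edge of $G_N$ is examined from an expanded endpoint, and a finite label makes a vertex enter $Q_v$. So if $G_N$ contains a path from $\mathbf{x}_k$ to some sample in $\mathbf{X}_{\mathrm{goal}}^{(j)}$, the incumbent $c_j$ is finite, and tracing parents gives a feasible $\sigma_{N,j}\subset\hat{\mathcal{X}}_{\mathrm{free}}$ ending in $\mathbf{X}_{\mathrm{goal}}^{(j)}$. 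It therefore suffices to show that such a path exists with probability tending to one.

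For the construction, let $\sigma$ be the $\delta$-clear curve, of length $L$, ending at $\mathbf{y}\in\operatorname{int}(\mathcal{X}_{\mathrm{goal}}^{(j)})$. Shrinking $\delta$ if necessary, we assume that $B(\mathbf{y},\delta)\subset\mathcal{X}_{\mathrm{goal}}^{(j)}$ and that the $\delta$-tube around $\sigma$ lies in $\hat{\mathcal{X}}_{\mathrm{free}}$. Following the standard RRT$^*$/PRM$^*$ argument of \cite{Karaman2011IJRR}, we place $M_N\approx 4L/r_N$ points $\mathbf{z}_0=\mathbf{x}_k,\ldots,\mathbf{z}_{M_N}=\mathbf{y}$ along $\sigma$ with consecutive spacing at most $r_N/4$, and attach to each the ball $B_m:=B(\mathbf{z}_m,r_N/4)$. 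For $N$ large we have $r_N/4<\delta$, so every $B_m$ lies in free space and the last ball lies in the goal region. If every $B_m$ with $m\ge1$ contains a sample, then consecutive chosen samples are at distance at most $3r_N/4\le r_N$. The segment joining them stays within $r_N$ of $\sigma$, hence inside the $\delta$-tube, so it is collision-free and has finite cost by \eqref{eq:edge-cost}. The same holds for the first hop from $\mathbf{x}_k=\mathbf{z}_0$. The chain of samples is then a path in $G_N$ from $\mathbf{x}_k$ to a sample in $\mathbf{X}_{\mathrm{goal}}^{(j)}$.

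The probabilistic estimate is where the lower bound \eqref{eq:minimum_sampling_density} enters. Each sample falls in a given $B_m$ with probability at least
\[
p_N:=q_{\min}\,\xi_n (r_N/4)^n = q_{\min}\xi_n\gamma^n 4^{-n}\,\frac{\log N}{N}.
\]
A union bound then gives
\[
\Pr(\text{some } B_m \text{ empty})\le M_N(1-p_N)^N\le M_N\exp(-N p_N)= M_N\, N^{-q_{\min}\xi_n\gamma^n4^{-n}}.
\]
Since $M_N=O\bigl((N/\log N)^{1/n}\bigr)$, this tends to zero whenever the exponent exceeds $1/n$.

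We expect the main obstacle to be matching this threshold to the stated $\gamma_q$ in \eqref{eq:mixture_rgg_threshold}. The crude $r_N/4$ balls need $\gamma$ larger than $\gamma_q$ by a constant factor. To close the gap, we would first use balls of radius $\theta r_N$ with spacing $(1-2\theta)r_N$-compatible, letting $\theta\to1/2$, so that the condition becomes $q_{\min}\xi_n(\gamma/2)^n>1/n$. This is weaker than $\gamma>\gamma_q=2\bigl((1+1/n)/(q_{\min}\xi_n)\bigr)^{1/n}$, so the stated condition suffices; the extra factor $(1+1/n)$ only adds margin. A secondary point is that the theorem is stated with the limit in $N$ for a fixed planning cycle. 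The i.i.d.\ modelling with density bounded below by $q_{\min}$ is exactly what is needed, and the information-biased component can be ignored because it only increases ball probabilities.
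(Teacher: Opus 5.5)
Your proposal is correct and follows the paper's route: the refined cover (balls of radius $\theta r_N$ with centre spacing $(1-2\theta)r_N$, for a fixed $\theta<1/2$ close to $1/2$) is the paper's cover by balls of radius $\rho_N=r_N/(2+\theta_1)$ with spacing $\theta_1\rho_N$, and it uses the same lower bound $q\ge q_{\min}$ on ball-hitting probabilities and the same reduction to Proposition~\ref{prop:fixed_graph_optimality}. The only difference is that you carry out the union bound over the $M_N=O((N/\log N)^{1/n})$ balls explicitly, which shows an exponent above $1/n$ already suffices. The paper instead imposes $a>1+1/n$, the almost-sure threshold from Karaman--Frazzoli, and leaves the aggregation over balls implicit.
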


\begin{IEEEproof}
The proof adapts the standard RGG path-covering construction under uniform sampling, e.g., Lemmas~52--54 of Karaman and Frazzoli~\cite{Karaman2011IJRR}, to the mixture density in Eq.~\eqref{eq:mixture_sampling_density}. Let \(\sigma^{\mathrm{f}}\) denote a \(\delta\)-clear path from \(\mathbf{x}_k\) to the interior of \(\mathcal{X}_{\mathrm{goal}}^{(j)}\). We aim to cover this path by a sequence of small balls and seek at least one sample in each ball, so that samples selected from consecutive balls form a path in the sample-induced graph.

Since \(\gamma>\gamma_q\) as defined in \eqref{eq:mixture_rgg_threshold}, a sufficiently small \(\theta_1>0\) can be chosen such that
\[ a:= q_{\min}\xi_n \left( \frac{\gamma}{2+\theta_1} \right)^n > 1+\frac{1}{n}, \]
where \(q_{\min}\) is given in Eq.~\eqref{eq:minimum_sampling_density} and \(\xi_n\) is the volume of the unit ball in \(\mathbb{R}^n\). Define
\[
\rho_N=\frac{r_N}{2+\theta_1},
\]
and cover \(\sigma^{\mathrm{f}}\) by balls \(B_i=B(\mathbf{c}_i,\rho_N)\), with consecutive centres satisfying \[ \|\mathbf{c}_{i+1}-\mathbf{c}_i\|_2 \leq \theta_1\rho_N. \] For sufficiently large \(N\), the balls lie in the \(\delta\)-clear tube around \(\sigma^{\mathrm{f}}\), and the final ball is contained in \(\mathcal{X}_{\mathrm{goal}}^{(j)}\).

Although the sampling density is information-biased, its uniform component yields the global lower bound \(q(\mathbf{x})\geq q_{\min}\). Thus, each ball has sampling probability at least \(q_{\min}\lambda(B_i)\). Since \[ \lambda(B_i) = \xi_n\rho_N^n = \xi_n \left( \frac{\gamma}{2+\theta_1} \right)^n \frac{\log N}{N}, \] the probability that a particular ball contains no sample satisfies 
\begin{align} 
\Pr\!\left( B_i\cap\mathbf{X}_{\mathrm{s}}=\emptyset \right) 
&\leq \left( 1-q_{\min}\lambda(B_i) \right)^N \nonumber\\ 
&= \left( 1-a\frac{\log N}{N} \right)^N \leq N^{-a}. 
\label{eq:pc_empty_ball_bound} 
\end{align} 
This is the mixture-sampling counterpart of the ball-occupancy bound in Lemma~52 of~\cite{Karaman2011IJRR}. Since \(a>1+1/n\), the corresponding path-covering argument implies that, with probability approaching one, every non-root ball contains at least one sample.

Let \(\mathbf{x}_i\in B_i\) and \(\mathbf{x}_{i+1}\in B_{i+1}\) be samples selected from consecutive balls. Then \begin{equation} \|\mathbf{x}_{i+1}-\mathbf{x}_i\|_2 \leq 2\rho_N+\theta_1\rho_N = r_N. \label{eq:pc_rgg_connection} \end{equation} The same construction applies to the initial connection from \(\mathbf{x}_k\). By the clearance of \(\sigma^{\mathrm{f}}\), these straight-line local connections are collision-free. Hence, with probability approaching one, \(G_N\) contains a collision-free path from \(\mathbf{x}_k\) to a sampled state in the final ball, which belongs to \(\mathbf{X}_{\mathrm{goal}}^{(j)}\). Proposition~\ref{prop:fixed_graph_optimality} then ensures that the exhaustive form of XIT returns such a path, which proves Eq.~\eqref{eq:pc_xit}.
\end{IEEEproof}

For the following almost-sure statements, let
\[
\mathbf{X}_{\mathrm{s}}^{N}
=
\{\mathbf{x}^{[1]},\ldots,\mathbf{x}^{[N]}\},
\qquad
\mathbf{X}_{\mathrm{goal}}^{(j),N}
=
\mathcal{X}_{\mathrm{goal}}^{(j)}
\cap
\mathbf{X}_{\mathrm{s}}^{N},
\]
where \(\{\mathbf{x}^{[\ell]}\}_{\ell\geq1}\) is an infinite i.i.d. sequence drawn from \(q(\mathbf{x})\). For each fixed \(N\), \(\mathbf{X}_{\mathrm{s}}^{N}\) is precisely the sample set used by Algorithm~\ref{AlgX}. This convention is used only to define convergence as \(N\rightarrow\infty\).

\begin{lemma}[Asymptotic approximation by the sample-induced graph] 
\label{lem:graph_ao}
Let \(G_N\) denote the collision-free radius graph induced by \(\mathbf{X}_{\mathrm{s}}^{N}\), and let \(J_{G_N,j}^{\star}\) be the minimum cost of a path in \(G_N\) from \(\mathbf{x}_k\) to \(\mathbf{X}_{\mathrm{goal}}^{(j),N}\). Suppose that the planning problem admits a robustly optimal trajectory \(\sigma_j^\star\) with finite cost 
\[ J_j^\star:=J(\sigma_j^\star), \] 
and that the continuous line-cost counterpart of Eq.~\eqref{eq:edge-cost} satisfies
the regularity assumptions required for asymptotic optimality. Then, under the sampling and connection-radius conditions of Theorem~\ref{thm:pc}, 
\begin{equation} 
\Pr\!\left( \lim_{N\rightarrow\infty} J_{G_N,j}^{\star} = J_j^\star \right) =1. 
\label{eq:graph_ao} 
\end{equation} \end{lemma}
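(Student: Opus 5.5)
The proof splits into two one-sided inequalities. The lower bound is essentially free. Every path in \(G_N\) from \(\mathbf{x}_k\) to \(\mathbf{X}_{\mathrm{goal}}^{(j),N}\) is a piecewise-linear, collision-free curve in \(\hat{\mathcal{X}}_{\mathrm{free}}\) ending in \(\mathcal{X}_{\mathrm{goal}}^{(j)}\). It is therefore a feasible trajectory, and its graph cost equals its continuous line cost by the interpretation of Eq.~\eqref{eq:edge-cost}. Hence \(J_{G_N,j}^{\star}\geq J_j^\star\) for every \(N\), deterministically, since \(J_j^\star\) is the infimum over feasible trajectories. It therefore suffices to show \(\limsup_{N\to\infty} J_{G_N,j}^{\star}\leq J_j^\star\) almost surely.

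For the upper bound I will follow the Karaman--Frazzoli argument for PRM\(^*\)/RRT\(^*\) (Lemmas~55--60 of~\cite{Karaman2011IJRR}), adapted to the mixture density exactly as in the proof of Theorem~\ref{thm:pc}. The steps are as follows.

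\begin{enumerate}
\item Robust optimality of \(\sigma_j^\star\) gives, for each \(\epsilon>0\), a \(\delta_\epsilon\)-clear trajectory \(\sigma_\epsilon\) ending in \(\operatorname{int}(\mathcal{X}_{\mathrm{goal}}^{(j)})\) with \(J(\sigma_\epsilon)\leq J_j^\star+\epsilon\).
\item Cover \(\sigma_\epsilon\) by the balls \(B_i=B(\mathbf{c}_i,\rho_N)\) with \(\rho_N=r_N/(2+\theta_1)\) and centre spacing \(\theta_1\rho_N\), as in Theorem~\ref{thm:pc}.
\item Let \(A_N\) be the event that some ball is empty. Bound \(\Pr(A_N)\leq M_N N^{-a}\), where the number of balls satisfies \(M_N=O\bigl((N/\log N)^{1/n}\bigr)\). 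Since \(a>1+1/n\), this gives \(\sum_N \Pr(A_N)<\infty\), and Borel--Cantelli implies that almost surely every ball is occupied for all large \(N\).
\item On that event, select samples \(\mathbf{x}_i\in B_i\). The resulting polygonal path \(\sigma_N\) lies in the \(\delta_\epsilon\)-tube around \(\sigma_\epsilon\), is collision-free, has edges of length at most \(r_N\) by Eq.~\eqref{eq:pc_rgg_connection}, and ends in \(\mathbf{X}_{\mathrm{goal}}^{(j),N}\). Hence \(J_{G_N,j}^{\star}\leq J(\sigma_N)\).
\end{enumerate}

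The remaining work is to show \(J(\sigma_N)\to J(\sigma_\epsilon)\). Here the cost is the integral of \(\alpha+P\) along the curve, so it is not pure arc length. I will assume, as part of the stated regularity assumptions, that \(P\) is bounded, with \(0<P\leq 1\), and Lipschitz (or at least continuous) on the tube. Write the cost of each segment \([\mathbf{x}_i,\mathbf{x}_{i+1}]\) as \(\int(\alpha+P)\,ds\) and compare it to the cost of the corresponding piece of \(\sigma_\epsilon\) between the projections of \(\mathbf{c}_i\) and \(\mathbf{c}_{i+1}\). Two errors arise:
\begin{itemize}
\item a length error, controlled because \(\sum_i\|\mathbf{x}_{i+1}-\mathbf{x}_i\|\) converges to the length of \(\sigma_\epsilon\) as \(\theta_1\to0\) and \(\rho_N\to 0\);
\item an integrand error, bounded by \(L_P\cdot O(\rho_N)\) per unit length.
\end{itemize}
This gives \(\limsup_N J(\sigma_N)\leq J(\sigma_\epsilon)+C\theta_1\) almost surely. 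Since \(\theta_1\) can be taken arbitrarily small while keeping \(a>1+1/n\), letting \(\theta_1\to 0\) and then \(\epsilon\to0\) along countable sequences yields the almost-sure limit in Eq.~\eqref{eq:graph_ao}.

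The main obstacle is the length-error bound. Consecutive sample points can zig-zag inside their balls, and the straightforward bound on \(\sum_i \|\mathbf{x}_{i+1}-\mathbf{x}_i\|\) is only \((2+\theta_1)/\theta_1\) times the path length, which does not converge. My first approach is the Karaman--Frazzoli device: use a second, finer family of balls of radius \(\beta\rho_N\) at the same centres, and show that with high probability all but a fraction \(\beta\) of these small balls contain a sample (Lemma~57 of~\cite{Karaman2011IJRR}). Applying the concentration bound with \(q_{\min}\) in place of the uniform density, the path through occupied small balls then has cost within \(O(\beta)\) of \(J(\sigma_\epsilon)\), and the rare empty small balls are bridged using the large balls at bounded cost.
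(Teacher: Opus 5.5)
Your proposal is correct and follows the paper's route: a deterministic lower bound because every path in $G_N$ is a feasible trajectory, and an upper bound from the Karaman--Frazzoli PRM$^*$ ball-covering construction, with the uniform density replaced by $q_{\min}$ and a summable empty-ball probability. The paper only cites the path-convergence lemmas, whereas you spell out the $M_N N^{-a}$ Borel--Cantelli count and the finer-ball device that handles zig-zagging.

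One correction: the device does not give $\limsup_N J(\sigma_N)\le J(\sigma_\epsilon)+C\theta_1$, so drop that bound. It keeps $\theta_1$ fixed and yields excess cost of order $(\beta+\varphi)/\theta_1$, where $\varphi$ is the fraction of empty small balls, so you send $\beta,\varphi\to0$ rather than $\theta_1\to0$.
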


\begin{IEEEproof}
The proof adapts the PRM$^{*}$ graph-approximation construction in
Appendix~C of Karaman and Frazzoli~\cite{Karaman2011IJRR}. The construction of strongly clear paths approaching \(\sigma_j^\star\) and their associated ball covers is unchanged. The distribution-dependent part is the probability that each such ball contains a sampled vertex.

For the mixture sampler used in Algorithm~\ref{AlgX}, we have
\[
\Pr(\mathbf{x}\in B)
=
\int_B q(\mathbf{x})\,d\mathbf{x}
\geq
q_{\min}\lambda(B).
\]
Thus, the ball-occupancy calculation in Lemma~52 of \cite{Karaman2011IJRR} is preserved after replacing the uniform
density by \(q_{\min}\). The condition \(\gamma>\gamma_q\) provides the corresponding summable failure-probability bound. Consequently, the connection, clearance, and path-convergence arguments corresponding to Lemmas~53--55 of \cite{Karaman2011IJRR} can apply to \(G_N\), which contains all collision-free radius-\(r_N\) connections between its sampled vertices.

Hence, there exists, almost surely, a sequence of collision-free graph paths \(\tilde{\sigma}_{N,j}\subset G_N\) such that 
\[ J(\tilde{\sigma}_{N,j}) \longrightarrow J_j^\star. \]
Since every path in \(G_N\) is feasible for the continuous planning problem, \[ J_j^\star \leq J_{G_N,j}^{\star} \leq J(\tilde{\sigma}_{N,j}). \] The result follows by squeezing. 
\end{IEEEproof}

\begin{theorem}[Almost-sure asymptotic optimality of XIT]
\label{thm:ao}
Under the assumptions of Lemma~\ref{lem:graph_ao}, the graph-optimal
form of Algorithm~\ref{AlgX}, run until \(Q_v=\emptyset\), satisfies
\begin{equation}
\Pr\!\left(
\lim_{N\rightarrow\infty}
J\!\left(\sigma_{N,j}^{\mathrm{XIT}}\right)
=
J_j^\star
\right)
=1,
\label{eq:ao_xit}
\end{equation}
where \(\sigma_{N,j}^{\mathrm{XIT}}\) is the path returned by the graph-optimal form of the \(N\)-sample XIT instance to \(\mathbf{X}_{\mathrm{goal}}^{(j),N}\).
\end{theorem}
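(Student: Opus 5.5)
The argument combines Proposition~\ref{prop:fixed_graph_optimality} with Lemma~\ref{lem:graph_ao}, working pathwise on the underlying i.i.d.\ sequence \(\{\mathbf{x}^{[\ell]}\}_{\ell\geq1}\).

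\textbf{Reduction to the graph optimum.} Fix \(N\). Let \(\mathbf{X}_{\mathrm{s}}^{N}\) be the first \(N\) samples, and let \(G_N\) be the induced collision-free radius graph. Here \(r_N\) is computed from \(|\mathbf{X}_{\mathrm{s}}^N|=N\), so the radius of Algorithm~\ref{AlgX} agrees with the one in Theorem~\ref{thm:pc}. On the event that \(\mathbf{X}_{\mathrm{goal}}^{(j),N}\) contains a reachable state, Proposition~\ref{prop:fixed_graph_optimality} applies to the exhaustive execution. It gives \(J(\sigma_{N,j}^{\mathrm{XIT}})=J_{G_N,j}^{\star}\) exactly, because the returned path is the parent-traced path from the incumbent \(\mathbf{x}_j^\star\), and its cost equals \(c_j=\min_{\mathbf{x}\in\mathbf{X}_{\mathrm{goal}}^{(j),N}} g_\tau(\mathbf{x})\). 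On the complementary event both sides are \(\infty\) by convention. Either way, the identity \(J(\sigma_{N,j}^{\mathrm{XIT}})=J_{G_N,j}^{\star}\) holds for every \(N\) and every realisation.

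\textbf{Transfer of the limit.} Define the events
\[
A=\bigl\{\lim_{N\to\infty} J_{G_N,j}^{\star}=J_j^\star\bigr\},
\qquad
B=\bigl\{\lim_{N\to\infty} J(\sigma_{N,j}^{\mathrm{XIT}})=J_j^\star\bigr\}.
\]
Since the two sequences coincide term by term, \(A=B\). Lemma~\ref{lem:graph_ao} gives \(\Pr(A)=1\), which yields Eq.~\eqref{eq:ao_xit}.

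\textbf{Expected main obstacle: the realisations where the goal set is empty or unreachable.} If \(\mathbf{X}_{\mathrm{goal}}^{(j),N}\) has no reachable state, Proposition~\ref{prop:fixed_graph_optimality} says nothing about those realisations. I would handle this as follows. On \(A\) with \(J_j^\star<\infty\), we have \(J_{G_N,j}^{\star}<\infty\) for all sufficiently large \(N\). This means \(G_N\) does contain a path from \(\mathbf{x}_k\) to a reachable state of \(\mathbf{X}_{\mathrm{goal}}^{(j),N}\), so the proposition applies for all large \(N\). Finitely many exceptional indices do not affect the limit.

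A secondary check is whether the returned path's cost \(J(\sigma_{N,j}^{\mathrm{XIT}})\), computed with the discretised edge cost of Eq.~\eqref{eq:edge-cost}, is the same functional as the one used in Lemma~\ref{lem:graph_ao}. I would appeal to the stated convention that Eq.~\eqref{eq:edge-cost} is interpreted as the continuous line cost for the analysis. Both statements then refer to the same \(J\).
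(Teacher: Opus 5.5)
Your proposal is correct and takes the same approach as the paper. For each fixed \(N\), Proposition~\ref{prop:fixed_graph_optimality} gives \(J(\sigma_{N,j}^{\mathrm{XIT}})=J_{G_N,j}^{\star}\), and Lemma~\ref{lem:graph_ao} then supplies the almost-sure limit; your treatment of the finitely many indices where \(\mathbf{X}_{\mathrm{goal}}^{(j),N}\) has no reachable state spells out a detail the paper leaves implicit.
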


\begin{IEEEproof}
For every fixed \(N\), Proposition~\ref{prop:fixed_graph_optimality}
gives
\[
J\!\left(\sigma_{N,j}^{\mathrm{XIT}}\right)
=
J_{G_N,j}^{\star}.
\]
The result follows directly from
Lemma~\ref{lem:graph_ao}.
\end{IEEEproof}

\begin{remark}[Scope of the theoretical guarantees]
The PC and AO results concern the graph-optimal form of XIT for a fixed planning instance: the known-free space, information field, and frontier-derived target regions are held fixed, and the search runs until \(Q_v=\emptyset\). The AO statement is asymptotic in the sample size \(N\), which means that it is not an anytime or wall-clock optimality guarantee over the receding-horizon active GDM process. Moreover, obtaining an anytime refinement property within one planning call would require progressive sample densification and repeated graph optimisation, as in batch-refinement approaches such as BIT$^{*}$. This is intentionally outside the scope of the present XIT formulation as bounded planning latency is  more relevant in active GDM applications.
\end{remark}

The feasibility-first and time-budgeted variants defined in Section~\ref{subsec:xit_algorithm} need not be optimal on a fixed sample-induced graph. This trade-off is appropriate for active GDM: after the selected trajectory is executed and the resulting measurements are assimilated, the environment representation and frontier-derived goal regions are updated, yielding a new NBT problem. Promptly obtaining informative feasible trajectories is therefore generally more useful than exhaustively refining a planning snapshot that will soon be replaced. Fig.~\ref{fig:scenarios_traj} provides illustrative examples of the feasibility-first trajectories returned by \gls{xit}.
\begin{figure}[htb]
    \centering
    \includegraphics[width=\columnwidth]{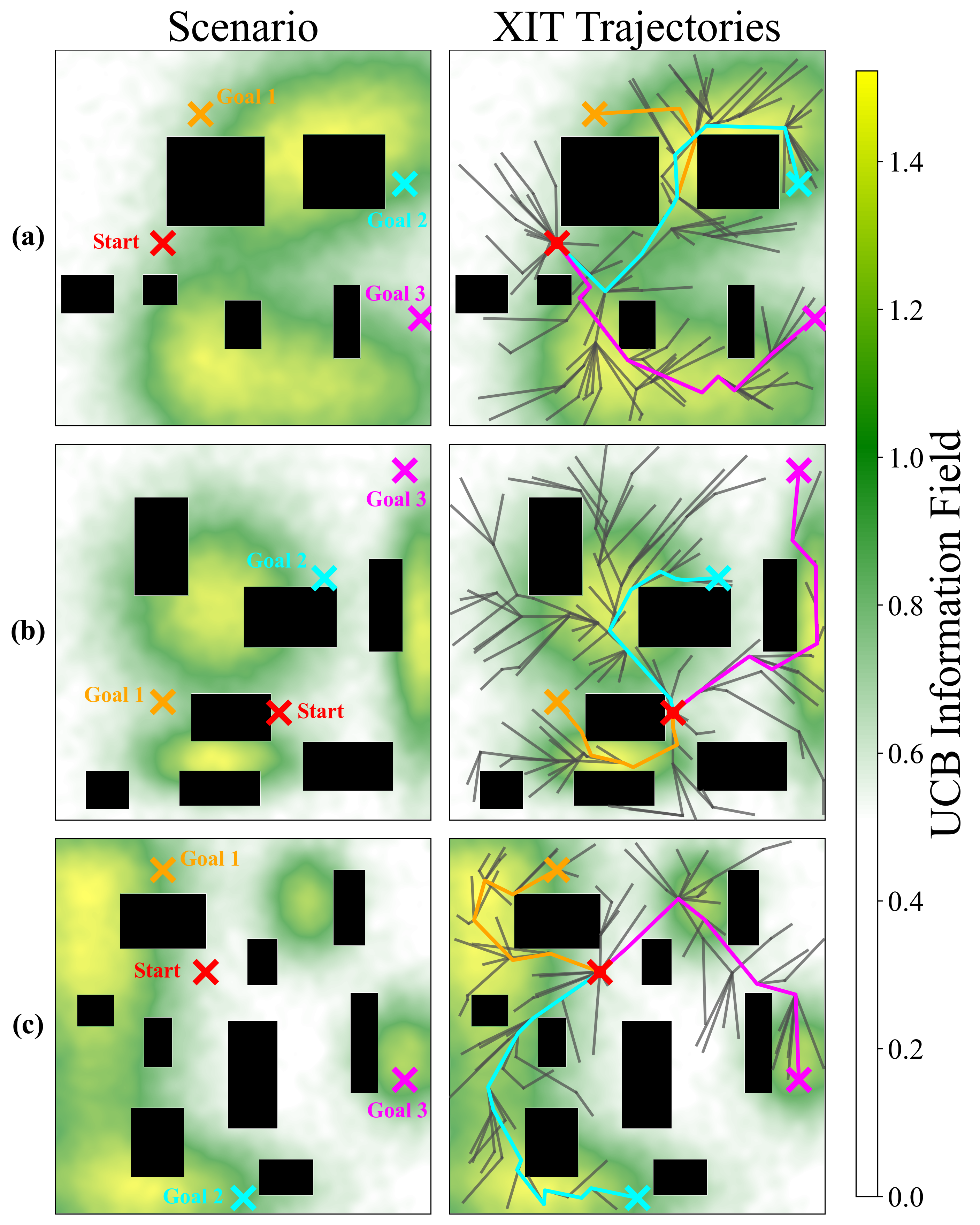}
    \caption{
        Example trajectories produced by \gls{xit} in three synthetic gas-polluted scenarios (a)--(c). The left column shows each scenario, including the \gls{ucb} information field, obstacles, start state, and frontier-derived goals. The right column shows the single \gls{xit} tree (grey), grown concurrently toward all goals, together with the trajectory recovered to each goal. The trees use \(N=150\) informed samples and \(\alpha=0\), so the explicit Euclidean-distance term is absent and the selected trajectories preferentially pass through high-\gls{ucb} regions while progressing toward their respective goals.
    }
    \label{fig:scenarios_traj}
\end{figure}

\section{Gas Frontier Detection}
\label{sec:gas_frontier}
As introduced in Section~\ref{sec:frontier-based goal regions}, gas frontiers $\mathcal{F}_k^{\mathrm{gas}}$ are intended to identify the boundary of high concentration regions that remain adjacent to unobserved gas states. They complement conventional occupancy frontiers $\mathcal{F}_k^{\mathrm{occ}}$ by steering the robot toward plume boundaries rather than only the edge of the known geometric map. Figure~\ref{fig:gas_frontier} illustrates this concept in isolation of $\mathcal{F}_k^{\mathrm{occ}}$, showing both the cell-level depiction and the resulting gas exploration behaviour as gas frontiers guide the robot to probe unresolved plume boundaries. We present the \acrfull{wgfd} algorithm to detect them. The proposed gas edge detection algorithm is based on \gls{wfd}, an iterative method introduced in \cite{Keidar2014EfficientFrontier} that conducts a graph search over known free-space regions of an occupancy grid map to identify frontiers. Frontier algorithms relying on raw sensor readings, such as \gls{ffd} \cite{Keidar2014EfficientFrontier}, are not suitable for gas frontier detection, since the \textit{in situ} gas sensor utilised provides point measurements that cannot directly infer gas frontiers but can be processed into a gas distribution map. \gls{wgfd} applies the same wavefront principles as \gls{wfd} to identify unknown cells that lie adjacent to observed regions with high gas concentration.

\subsection{Formal Definition of Gas Frontiers}

We first formalise the notion of a gas frontier cell at a fixed planning step $k$ and, following the convention in Section~\ref{sec:notation}, drop the index $k$ on all quantities where this causes no ambiguity. Recall that the occupancy grid $\mathcal{M}^{\mathrm{occ}}$ is defined on the lattice $\mathcal{C}$, and that the known free subset $\hat{\mathcal{X}}_{\mathrm{free}}$ in~\eqref{eq:known-free} corresponds to the union of cells whose occupancy probability is below the free threshold $\tau_{\mathrm{free}}$. At the level of individual cells, we define the set of free cells as
\begin{equation}
    \mathcal{C}^{\mathrm{free}} := \{\, c \in \mathcal{C} \mid p(c) \le \tau_{\mathrm{free}} \,\}.
    \label{eq:c_free}
\end{equation}

The gas distribution map $\mathcal{M}^{\mathrm{gas}}$ provides the posterior mean $\mu(c)$ and variance $\varepsilon^2(c)$ for each free cell $c \in \mathcal{C}^{\mathrm{free}}$, obtained from the GMRF inference described in Section~\ref{sec:envioronment representation}. Let $\mathcal{C}^{\mathrm{gas,obs}} \subseteq \mathcal{C}^{\mathrm{free}}$ denote the subset of cells whose gas state has been sufficiently constrained by measurements, and let
\[
\mathcal{C}^{\mathrm{gas,unk}} := \mathcal{C}^{\mathrm{free}} \setminus \mathcal{C}^{\mathrm{gas,obs}}
\]
denote the free cells whose gas state remains unobserved or highly uncertain. The precise definition of $\mathcal{C}^{\mathrm{gas,obs}}$ is determined by the inference procedure; in this work, a cell belongs to $\mathcal{C}^{\mathrm{gas,obs}}$ once its marginal variance $\varepsilon^2(c)$ has been reduced below its prior level, i.e., the variance the field assigns in the absence of measurements, and to $\mathcal{C}^{\mathrm{gas,unk}}$ otherwise.

\begin{figure}[t]
    \centering
    \includegraphics[width=0.9\columnwidth]{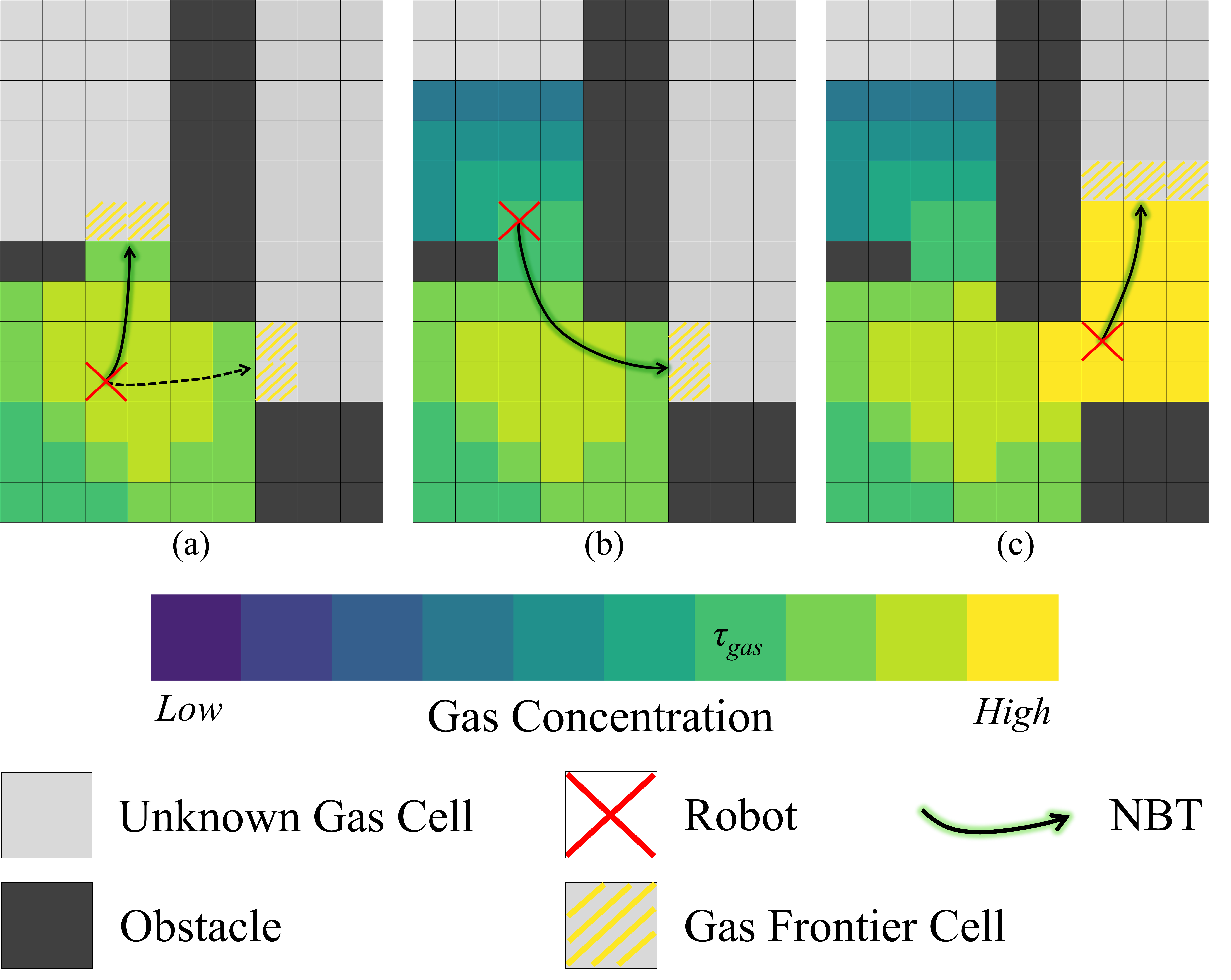}
\caption{
Cell-level depiction of gas frontiers and their role in path planning within an active \gls{gdm} task over three sequential time steps. 
(a) The robot enters from the bottom left and finds itself within a critical gas region above $\tau_{\text{gas}}$ based on its updated belief. Two gas frontiers are identified, and one is selected as the \gls{nbt}. 
(b) Upon reaching the gas frontier, the updated belief reveals that the area is no longer a priority as no new gas frontiers emerged based on the estimate, suggesting a less concentrated region. It now targets the other previously identified gas frontier from (a), which has remained unresolved. 
(c) Resolving the second gas frontier reveals a high-concentration zone that generates a new gas frontier, which can be explored next to continue the adaptive plume probing process.
}
    \label{fig:gas_frontier}
\end{figure}

To identify gas frontiers we require a gas concentration threshold that selects cells belonging to significant gas regions. We define a dynamic threshold $\tau_{\text{gas}}$ as
\begin{equation}
    \tau_{\text{gas}} := \max\!\bigl(\, q_p,\ \tau_{\text{gas,min}} \,\bigr),
    \label{eq:gamma_def}
\end{equation}
where $q_p$ is the $p$th percentile of the posterior mean values $\mu(c)$ over $\mathcal{C}^{\mathrm{gas,obs}}$, and $\tau_{\text{gas,min}} > 0$ is a user specified lower bound that prevents overly permissive detection in near zero gas conditions. A free cell $c \in \mathcal{C}^{\mathrm{gas,obs}}$ is said to be gas critical if
\begin{equation}
    \mu(c) \ge \tau_{\text{gas}},
\end{equation}
and the set of gas critical cells is denoted
\begin{equation}
    \mathcal{C}^{\mathrm{crit}} := \{\, c \in \mathcal{C}^{\mathrm{gas,obs}} \mid \mu(c) \ge \tau_{\text{gas}} \,\}.
\end{equation}
We adopt a neighbourhood operator $\mathcal{N}(c) \subset \mathcal{C}$, which returns the set of cells adjacent to $c$ on the lattice. A cell $c \in \mathcal{C}^{\mathrm{gas,unk}}$ is classified as a gas frontier cell if it lies in the unknown gas region yet is adjacent to at least one gas critical cell, that is,
\begin{equation}
    c \in \mathcal{C}^{\mathrm{gas,unk}} \text{ is a gas frontier cell if } 
    \mathcal{N}(c) \cap \mathcal{C}^{\mathrm{crit}} \neq \emptyset.
    \label{eq:gf_cell}
\end{equation}
The set of all gas frontier cells is written
\begin{equation}
    \mathcal{C}^{\mathrm{gf}} := \{\, c \in \mathcal{C}^{\mathrm{gas,unk}} \mid \mathcal{N}(c) \cap \mathcal{C}^{\mathrm{crit}} \neq \emptyset \,\}.
\end{equation}

A gas frontier $f^{\mathrm{gas}}$ is defined as a maximal connected subset of $\mathcal{C}^{\mathrm{gf}}$ under the neighbourhood relation, meaning that any two cells in $f^{\mathrm{gas}}$ can be joined by a path of neighbouring gas frontier cells and no strict superset of $f^{\mathrm{gas}}$ has this property. The collection of all gas frontiers at the current planning step is written
\begin{equation}
    \mathcal{F}^{\mathrm{gas}} := \{\, f^{\mathrm{gas}}_1, f^{\mathrm{gas}}_2, \ldots, f^{\mathrm{gas}}_M \,\},
\end{equation}
which corresponds to the set $\mathcal{F}_k^{\mathrm{gas}}$ introduced in Section~\ref{sec:frontier-based goal regions} once the index $k$ is reinstated. Each $f^{\mathrm{gas}}_m \in \mathcal{F}^{\mathrm{gas}}$ captures the boundary of a high concentration gas region that is adjacent to unknown gas states (Fig.~\ref{fig:gas_frontier}). These clusters provide the gas frontier component of the frontier based goal set $\mathcal{G}_k$ that drives plume aware exploration in the active \gls{gdm} framework.

\subsection{Wavefront Gas Frontier Detection}

The algorithm begins by initialising the gas frontier set $\mathcal{F}^{\mathrm{gas}}$ as empty and identifying the cell $c_r \in \mathcal{C}^{\mathrm{free}}$ that contains the robot's current pose $\mathbf{x}_k$ (Alg.~\ref{alg:WGFD}, Lines 1:2). The graph search proceeds only if the robot is located within a gas critical region. If the posterior mean at $c_r$ does not reach the threshold, $\mu(c_r) < \tau_{\text{gas}}$, then no local plume boundary is considered relevant and the algorithm returns the empty frontier set (Alg.~\ref{alg:WGFD}, Lines 3:4). Otherwise, the gas frontier flags $\phi_{\mathrm{gf}}$ and visited flags $\phi_v$ are initialised for all cells, the robot cell $c_r$ is marked visited, and the outer \gls{bfs} queue $Q$ is initialised with $c_r$ (Alg.~\ref{alg:WGFD}, Lines 5:7).
\begin{algorithm}
\caption{\gls{wgfd}$(\mathcal{M}^{\mathrm{gas}}, \mathcal{M}^{\mathrm{occ}}, \mathbf{x}_k, \tau_{\text{gas}})$}
\label{alg:WGFD}

$\mathcal{F}^{\mathrm{gas}} \gets \emptyset$\;
$c_r \gets c(\mathbf{x}_k)$\;
\If{$\mu(c_r) < \tau_{\mathrm{gas}}$}{
    \Return{$\mathcal{F}^{\mathrm{gas}}$}
}

{
$\phi_{\mathrm{gf}}(c),\, \phi_v(c) \gets \text{False}$ for all $c \in \mathcal{C}$\;
}

$\phi_v(c_r) \gets \text{True}$\;
Initialise BFS queue $Q$ with $c_r$\;

\While{$Q \neq \emptyset$}{
    $c \gets \text{dequeue}(Q)$\;
    \ForEach{$c_n \in \mathcal{N}(c)$}{
        \If{$\mathrm{isNewGasFrontierCell}(c_n)$}{
            $\phi_{\mathrm{gf}}(c_n) \gets \text{True}$\;
            $f^{\mathrm{gas}} \gets \{c_n\}$\;
            Initialise BFS queue $Q_f$ with $c_n$\;

            \While{$Q_f \neq \emptyset$}{
                $c' \gets \text{dequeue}(Q_f)$\;
                \ForEach{$c'_n \in \mathcal{N}(c')$}{
                    \If{$\mathrm{isNewGasFrontierCell}(c'_n)$}{
                        $\phi_{\mathrm{gf}}(c'_n) \gets \text{True}$\;
                        $f^{\mathrm{gas}} \gets f^{\mathrm{gas}} \cup \{c'_n\}$\;
                        $\text{enqueue}(c'_n, Q_f)$\;
                    }
                }
            }

            \If{$|f^{\mathrm{gas}}| \ge \texttt{min\_frontier\_size}$}{
                $\mathcal{F}^{\mathrm{gas}} \gets \mathcal{F}^{\mathrm{gas}} \cup \{f^{\mathrm{gas}}\}$\;
            }
        }
        \ElseIf{$c_n \in \mathcal{C}^{\mathrm{crit}}$ and $\phi_v(c_n) = \text{False}$}{
            $\phi_v(c_n) \gets \text{True}$\;
            $\text{enqueue}(c_n, Q)$\;
        }
    }
}

\Return{$\mathcal{F}^{\mathrm{gas}}$}
\end{algorithm}

The outer \gls{bfs} propagates only through the connected component of $\mathcal{C}^{\mathrm{crit}}$ that contains the robot's current cell (Alg.~\ref{alg:WGFD}, Lines 8:26). Although $\mathcal{C}^{\mathrm{crit}}$ denotes the set of all gas-critical cells in the map, the wavefront cannot cross subcritical regions, so only the local gas region is explored. This locality is deliberate: disconnected gas regions are discovered by occupancy-frontier exploration and acquire their own gas frontiers once entered, while already-resolved regions are not re-scanned at every planning cycle.
At each iteration, the next cell $c$ is dequeued from $Q$ and its neighbours $c_n \in \mathcal{N}(c)$ are examined (Alg.~\ref{alg:WGFD}, Line 9). If a neighbour satisfies the new gas frontier predicate in Alg.~\ref{alg:isNewGasFrontierCell}, it is immediately flagged as a gas frontier cell, used to initialise a new frontier set $f^{\mathrm{gas}}$, and enqueued into an inner \gls{bfs} queue $Q_f$ (Alg.~\ref{alg:WGFD}, Lines 11:14). The inner \gls{bfs} then groups all connected gas frontier cells into the same frontier (Alg.~\ref{alg:WGFD}, Lines 15:21): while $Q_f$ is not empty, the current cell $c'$ is dequeued, its neighbours $c'_n$ are checked, and any neighbour that again qualifies as a new gas frontier cell is flagged, added to $f^{\mathrm{gas}}$, and enqueued into $Q_f$. This inner loop terminates once no further neighbouring gas frontier cells can be found, yielding a maximal connected subset $f^{\mathrm{gas}} \subset \mathcal{C}^{\mathrm{gf}}$ in the sense of the formal definition. Frontiers whose cardinality falls below a minimum size threshold are discarded, filtering out small, noise induced structures (Alg.~\ref{alg:WGFD}, Lines 22–23).

If a neighbour $c_n$ is not a gas frontier cell but belongs to the gas critical set $\mathcal{C}^{\mathrm{crit}}$ being explored and has not yet been visited, it is marked visited and enqueued into the outer \gls{bfs} queue $Q$ (Alg.~\ref{alg:WGFD}, Lines 24:26). In this way, the outer \gls{bfs} explores the interior of the gas region above the threshold, while the inner \gls{bfs} extracts the plume boundary where this region borders unknown gas states.

\begin{algorithm}
\caption{\textnormal{isNewGasFrontierCell}$(c)$}
\label{alg:isNewGasFrontierCell}

\If{$c \notin \mathcal{C}^{\mathrm{gas,unk}}$}{
    \Return{\text{False}} \tcp*{Gas state is not unknown}
}
\If{$\phi_{\mathrm{gf}}(c) = \text{True}$}{
    \Return{\text{False}} \tcp*{Already flagged as gas frontier}
}
\If{$\mathcal{N}(c) \cap \mathcal{C}^{\mathrm{crit}} = \emptyset$}{
    \Return{\text{False}} \tcp*{No neighbouring gas critical cell}
}
\Return{\text{True}}
\end{algorithm}

The helper function in Alg.~\ref{alg:isNewGasFrontierCell} implements the cell level definition in Eq.~\eqref{eq:gf_cell}. A candidate cell $c$ must lie in the gas unknown set $\mathcal{C}^{\mathrm{gas,unk}}$, must not already have been assigned to an existing frontier, and must have at least one neighbouring cell in the gas critical set $\mathcal{C}^{\mathrm{crit}}$. Cells that violate any of these conditions are rejected. When the outer \gls{bfs} queue $Q$ becomes empty, all reachable gas critical regions have been processed and the procedure returns $\mathcal{F}^{\mathrm{gas}}$.

\section{\texorpdfstring{XIT Characterisation and Ablation Studies}{XIT Characterisation and Ablation Studies}}
This section empirically characterises the principal design choices of \gls{xit} under fixed planning snapshots and in the closed \gls{gdm} loop. The studies isolate the effect of \gls{ucb}-guided sampling, examine the closed-loop information--distance trade-off controlled by \(\alpha\), and quantify the computational benefit of the shared multi-goal tree structure in comparison with BIT$^*$. These controlled studies complement the end-to-end evaluation of the complete receding-horizon active-GDM framework in Section~\ref{sec:high-fidelity simulation}.

\subsection{\texorpdfstring{Effect of UCB-Guided Sampling}{Effect of UCB-Guided Sampling}}

We first examine the finite-sample effect of the \gls{ucb}-guided mixture sampler using the three fixed planning scenarios illustrated in Fig.~\ref{fig:scenarios_traj}. To isolate the sampling mechanism, we use the graph-optimal form of \gls{xit} and set \(\alpha=0\), such that the explicit Euclidean-distance term is absent from the objective. For each sample budget \(N\), Fig.~\ref{fig:convergence} reports the goal-averaged trajectory cost over 50 seeded runs.
\begin{figure*}[!t]
    \centering
    \includegraphics[width=\textwidth]{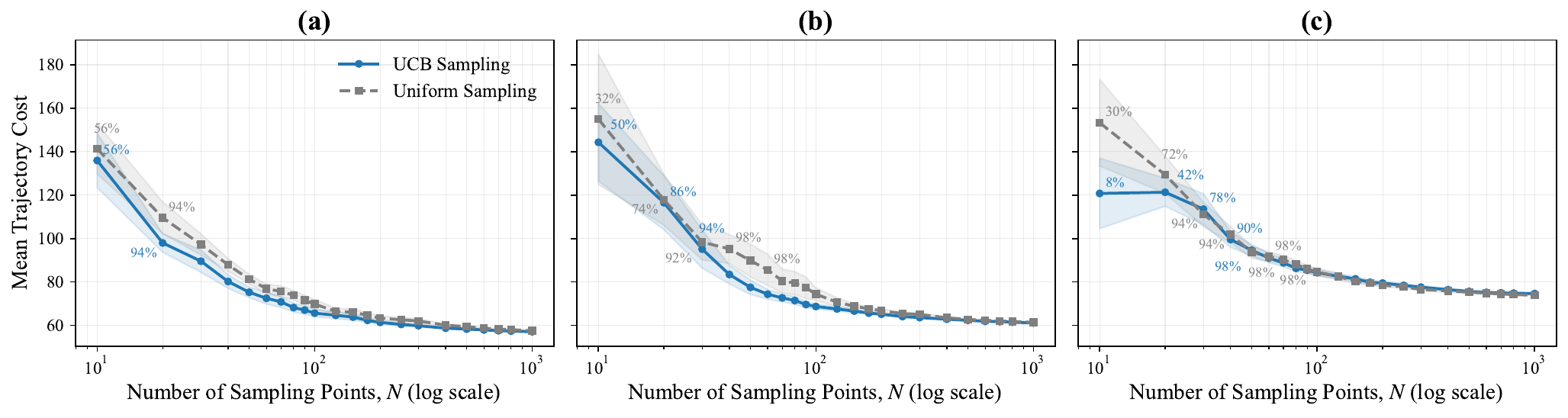}
    \caption{
        Goal-averaged graph-optimal trajectory cost for the three scenarios in Fig.~\ref{fig:scenarios_traj}, against sample budget \(N\). \gls{ucb}-guided mixture sampling is compared with uniform sampling using \(\alpha=0\). Curves show the mean over the runs (of 50 random seeds) in which all three goal regions were reached; shaded bands indicate 95\% confidence intervals, and labels give the success rate where below 100\%. Both strategies approach the same low-cost regime as \(N\) increases, whereas \gls{ucb}-guided sampling provides an advantage when the sample size is limited, most visibly in (a) and (b).
    }
    \label{fig:convergence}
\end{figure*}

Both sampling strategies approach the same low-cost regime as \(N\) increases, consistent with the asymptotic analysis in Section~\ref{subsec:algorithm properties}. The benefit of the \gls{ucb}-guided mixture is therefore principally at smaller sample budgets, which are the budgets relevant to real-time receding-horizon operation. The magnitude of this advantage is scenario-dependent: it is greatest where informative regions lie along the routes between the start and the goals, as in (a) and (b), so that early samples in these regions directly improve the attainable trajectories, and smaller where they do not, as in (c). %

These results show that \gls{ucb}-guided sampling improves finite-sample efficiency while preserving global coverage through its uniform component (Eq.~\eqref{eq:mixture_sampling_density}). The information-biased component preferentially allocates samples to regions that are valuable for gas exploitation and uncertainty reduction.

\subsection{\texorpdfstring{Alpha Ablation}{Alpha Ablation}}
\label{subsec:alpha_ablation}

We next quantify the effect of the distance weight \(\alpha\) in Eq.~\eqref{eq:edge-cost} on closed-loop \gls{gdm} accuracy. The receding-horizon loop is reproduced in a controlled Monte Carlo setting with a static ground-truth plume: at each planning cycle, the \gls{gmrf} posterior yields the \gls{ucb} information field, \gls{xit} (\(N{=}300\), feasibility-first form) plans to six goals drawn uniformly over the free space, and the selected \gls{nbt} (Eq.~\eqref{eq:nbt2}) is executed at 1.25\,m/s with gas sampled at 2\,Hz, after which the \gls{gmrf} is rebuilt from all measurements. Missions comprise 40 planning cycles (one goal visited per cycle) in each of the three scenarios of Fig.~\ref{fig:scenarios_traj}, with the \gls{gmrf} using a 1.5\,m factor graph over the 0.2\,m occupancy grid (\(\sigma_\zeta{=}0.05\), \(\sigma_s{=}1.0\), \(\sigma_r{=}0.8\), \(\sigma_d{=}150\), in the notation of \cite{Rhodes2023STRUC}). Every \(\alpha\) value is evaluated over the same 50 seeded goal sequences, so performance differences are attributable to \(\alpha\) alone.

Mapping accuracy is measured against the ground truth over the critical gas region \(\mathcal{X}_{\mathrm{crit}} = \{\mathbf{x} \in \mathcal{X}_{\mathrm{free}} \mid \mu_{\mathrm{GT}}(\mathbf{x}) > z_{\text{thresh}}\}\), \(z_{\text{thresh}} = 2.5\)\,ppm, as
\begin{equation}
\mathrm{RMSE}_k =
\sqrt{
\frac{1}{|\mathcal{X}_{\mathrm{crit}}|}
\sum_{\mathbf{x} \in \mathcal{X}_{\mathrm{crit}}}
\left[
\hat{\mu}_k(\mathbf{x}) -
\mu_{\mathrm{GT}}(\mathbf{x})
\right]^2
}, \label{eq:rmse}
\end{equation}
where \(\hat{\mu}_k(\mathbf{x})\) is the \gls{gmrf} posterior mean after planning cycle \(k\) and \(\mu_{\mathrm{GT}}(\mathbf{x})\) the ground-truth concentration. The corresponding map uncertainty is the total differential entropy
\begin{equation}
H_k = \sum_{\mathbf{x} \in \mathcal{X}_{\mathrm{crit}}} \tfrac{1}{2}\log\big(2\pi e\, \varepsilon^2_k(\mathbf{x})\big), \label{eq:entropy}
\end{equation}
where \(\varepsilon^2_k(\mathbf{x})\) is the marginal posterior variance at \(\mathbf{x}\). Fig.~\ref{fig:alpha_ablation} reports the mean \(\mathrm{RMSE}_k\) over the 50 missions, together with the mean travel effort, i.e., the cumulative distance travelled. Across all three scenarios, \(\alpha=0\) attains the lowest RMSE at the cost of the greatest travel effort, while larger \(\alpha\) progressively trades mapping accuracy for shorter routes.

\begin{figure*}[!t]
    \centering
    \includegraphics[width=0.9\textwidth]{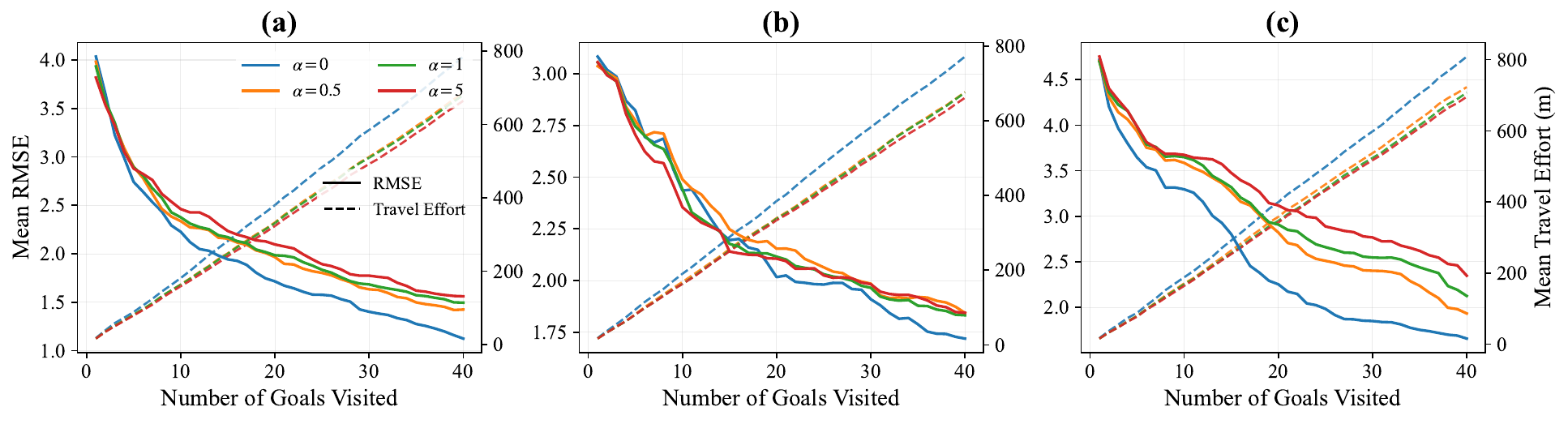}
    \caption{
        Mean gas-map RMSE (solid, left axis) and mean travel effort (dashed, right axis) versus the number of goals visited (one goal per planning cycle), over 50 seeded missions in the three scenarios of Fig.~\ref{fig:scenarios_traj} ((a)--(c)). Smaller \(\alpha\) invests greater travel effort per goal and yields consistently lower mapping error.
    }
    \label{fig:alpha_ablation}
\end{figure*}

\begin{figure*}[!t]
    \centering
    \includegraphics[width=\textwidth]{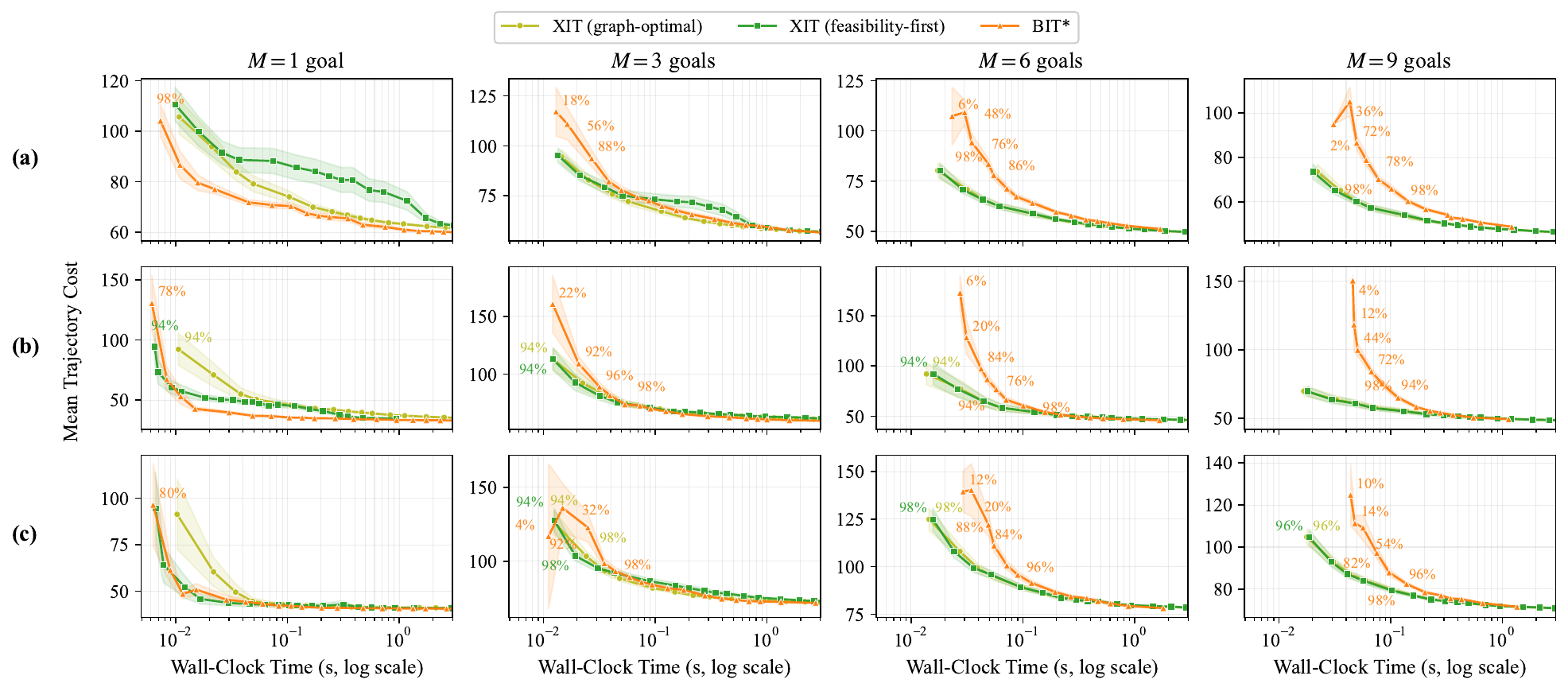}
    \caption{
        Goal-averaged trajectory cost over runs reaching all \(M\) goals (50 seeds; band: 95\% confidence interval) versus wall-clock planning time, for \gls{xit} (graph-optimal and feasibility-first) and a sequential \gls{bit*} baseline (\(\alpha=0\)). Rows (a)--(c) are the three scenarios of Fig.~\ref{fig:scenarios_traj}; columns are \(M=1,3,6,9\) goals. Each marker is a different sample budget \(N\), not the progress of a single run. Labels give the fraction of runs reaching all \(M\) goals where below 100\%.
    }
    \label{fig:cost_vs_time}
\end{figure*}

\subsection{\texorpdfstring{Computational Behaviour with Multiple Goals}{Computational Behaviour with Multiple Goals}}
\label{subsec:bitstar_comparison}

\gls{xit} grows a single informed tree toward all \(M\) goals at once, rather than planning to each separately. We compare it against a sequential \gls{bit*} baseline that plans to each goal in turn (in three sample batches) with \(N/M\) samples per goal, matching \gls{xit}'s total budget \(N\). For fairness, the baseline uses the same information-penalty cost and draws its initial batch from the same \gls{ucb}-informed density; its subsequent batches use \gls{bit*}'s native informed (ellipse) sampling, which we leave unchanged. The comparison uses \(\alpha=0\), the pure information-penalty setting used in Fig.~\ref{fig:convergence} and identified in Section~\ref{subsec:alpha_ablation} as yielding the most accurate closed-loop \gls{gdm}, and is repeated on the three scenarios of Fig.~\ref{fig:scenarios_traj}. Starting from a single goal at \(M=1\), further goals are added and dispersed across the workspace as \(M\) increases. Plotting cost against wall-clock time makes the comparison independent of each planner's per-sample cost. Timings are on an Intel Core i9-11900KF at 3.5\,GHz.

At \(M=1\) the dedicated single-goal baseline is, as expected, slightly stronger, reaching low-cost solutions sooner, since \gls{bit*} is designed precisely for this single-goal setting; as \(M\) grows \gls{xit} reaches low-cost solutions with markedly less computation (Fig.~\ref{fig:cost_vs_time}), and the gap widens because \gls{xit} amortises one tree across all goals while the sequential baseline repeats the search per goal. This multi-goal amortisation, together with the information-biased sampling, is what separates \gls{xit} from a single-goal \gls{bit*} with a modified cost. Of the two \gls{xit} modes shown, the feasibility-first form (used online) terminates as soon as every goal is connected, and therefore typically returns its trajectories with less computation than the exhaustive graph-optimal form. This early termination sacrifices no reachability: both modes report the same success rate, since whether a goal is reachable is a property of the sampled graph, not of the termination rule. It also suits online use, as per-cycle planning effort adapts to the current map and goal set at a fixed \(N\), rather than requiring \(N\) to be re-tuned to each environment to meet a time budget. As the goals become numerous, the problem approaches one of coverage: connecting every dispersed goal requires the tree to span most of the free space, so the feasibility-first search comes close to the full expansion performed by the graph-optimal form, and the two modes converge in both cost and computation.

\section{High-Fidelity Simulation Study}
\label{sec:high-fidelity simulation}
While no established benchmark exists for active \gls{gdm} in unknown, cluttered environments, this section evaluates the proposed \gls{xit} planner within our active \gls{gdm} framework (Fig.~\ref{fig:overview}). We perform high-fidelity Monte Carlo simulation in \gls{ros}, with the GADEN simulator~\cite{Monroy2017} providing realistic gas dispersion in two large-scale scenarios: a cluttered industrial unit layout and a warehouse. Performance is reported relative to an RRT* frontier-exploration baseline, and we then examine the influence of \gls{xit}'s key design choices, namely goal-selection policy and cost formulation, using representative limiting cases of the proposed \gls{ucb}-distance cost.

\subsection{Simulation Setup}
\label{sec:sim_setup}

\begin{figure}[!t]
    \centering
    \includegraphics[width=\columnwidth]{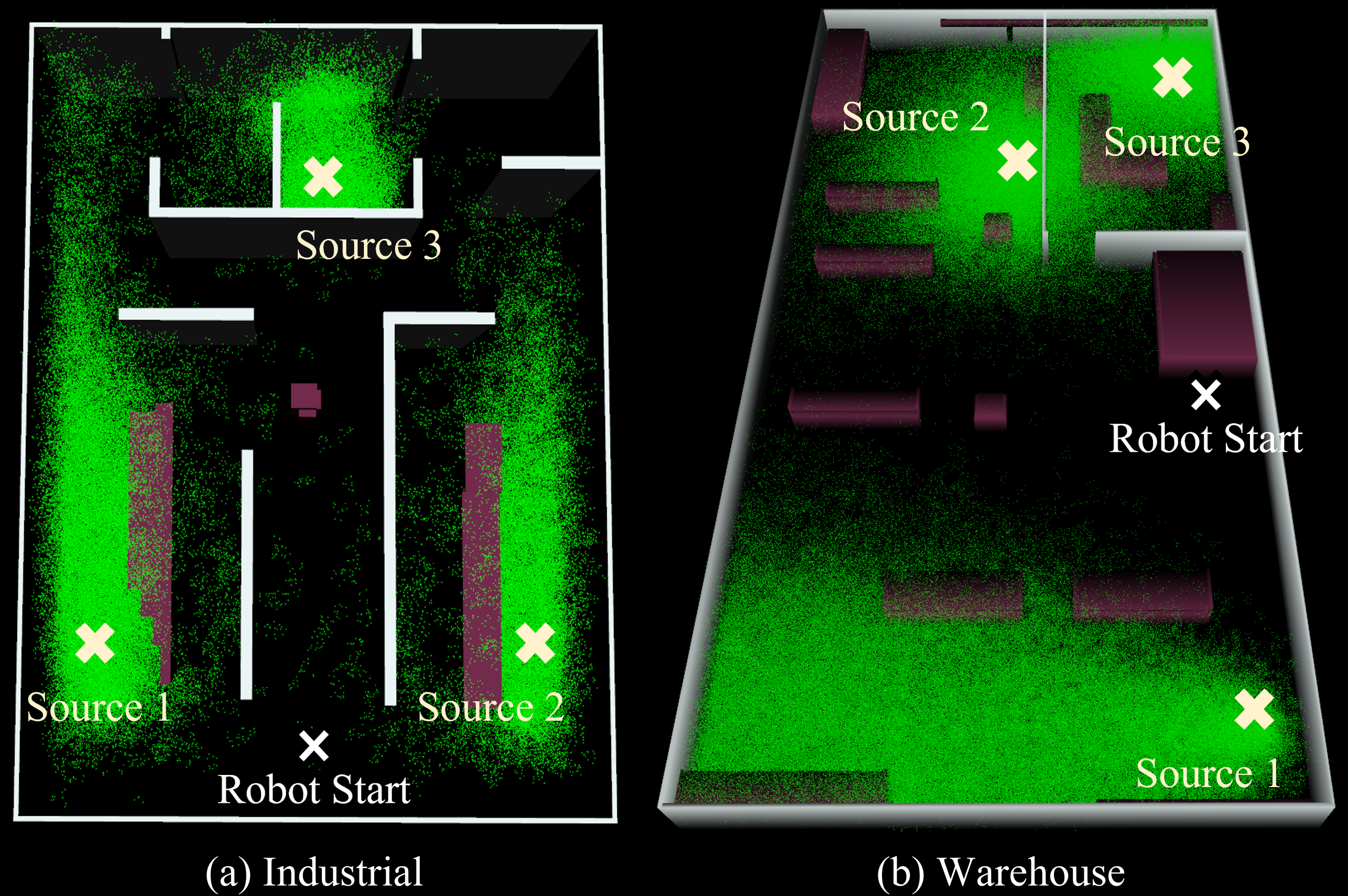}
    \caption{High-fidelity simulation scenarios: (a) the cluttered industrial indoor layout and (b) the warehouse, both with multi-source acetone dispersion. White lines denote walls and magenta blocks storage obstacles; green particles visualise the dispersion field; crosses mark the labelled acetone sources and robot start.}
    \label{fig:hifi_scenario_setup}
\end{figure}

\begin{figure}[!t]
    \centering
    \includegraphics[width=\columnwidth]{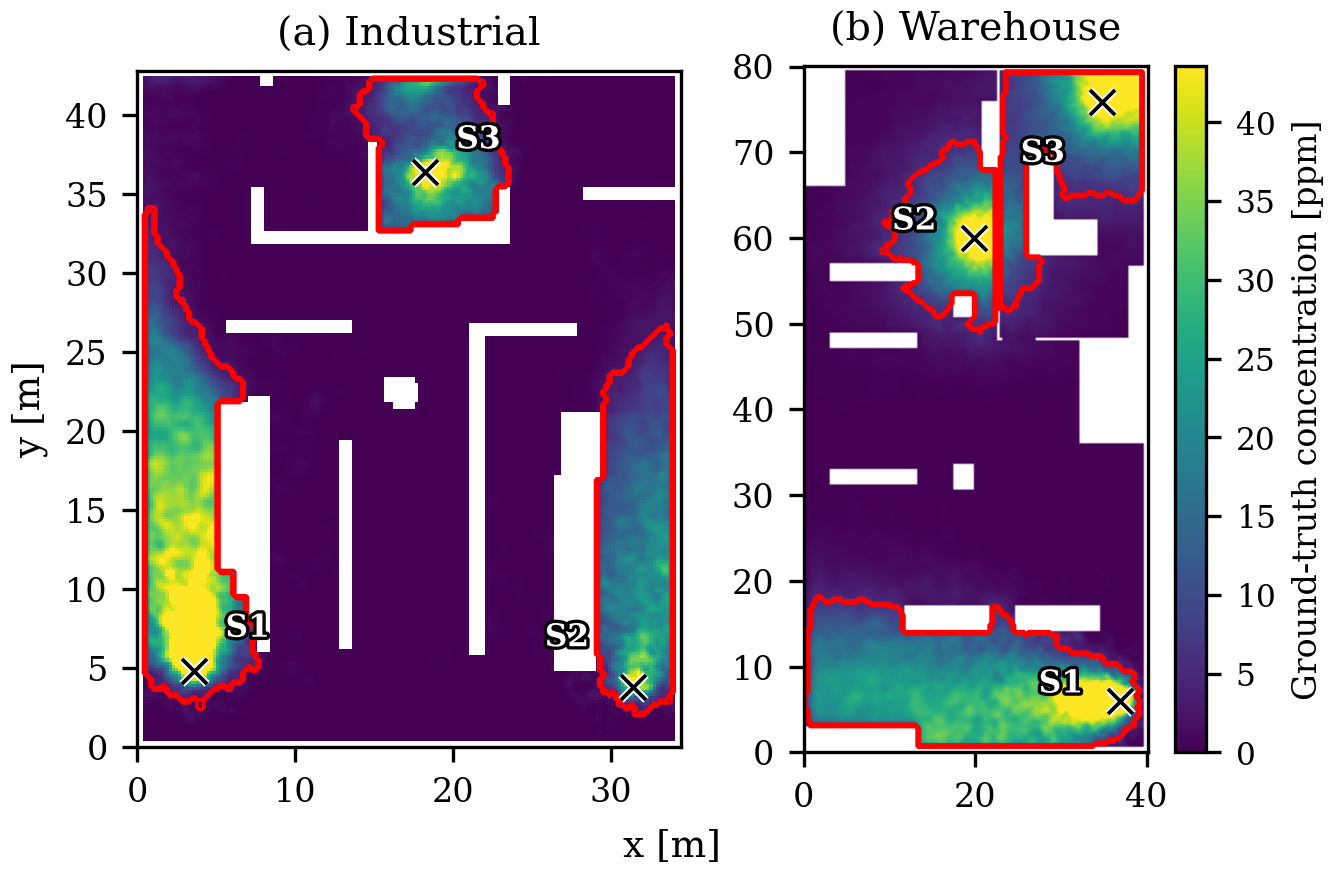}
    \caption{Ground-truth gas distributions of (a) the industrial scenario and (b) the warehouse scenario. Crosses mark the acetone sources (S1--S3, in the order of Table~\ref{tab:sim_parameters}), white regions are obstacles, and the critical gas regions used in the evaluation are outlined in red.}
    \label{fig:gt_two_scenarios}
\end{figure}

Two simulation scenarios are considered. The first comprises a cluttered industrial-style environment with storage containers and partitioned rooms. The second is a larger warehouse with aisles of storage racks, as shown in Fig.~\ref{fig:hifi_scenario_setup}. In each scenario, three acetone sources were placed at distinct locations with varying release rates detailed in Table~\ref{tab:sim_parameters}. The filament growth rate in GADEN was set to 15 cm$^2$/s for all sources. Both scenarios use steady airflow fields obtained from CFD simulation of the enclosed spaces, with typical speeds of 0.3 m/s (industrial) and 0.1 m/s (warehouse), the latter producing broader, slowly advected plumes. Although GADEN models source releases as unsteady processes, these wind fields impose a consistent advection regime and ensure approximately time-invariant plume structures, which is essential for fair comparison across trials. %

\begin{table}[h]
\centering
\footnotesize
\caption{Key Simulation Parameters.}
\label{tab:sim_parameters}
\begin{tabular}{@{}>{\raggedright\arraybackslash}p{0.55\columnwidth}l@{}}
\hline
Parameter & Value (industrial\,/\,warehouse) \\
\hline
\multicolumn{2}{@{}l}{\textbf{Environment \& Gas Sources}} \\[0.1em]
\quad Environment size [m] & $34 \times 43$~/~$80 \times 40$ \\
\quad Release rates S1, S2, S3 [ppm/s] & 1750, 1250, 1000 \\
\quad Filament growth rate [cm$^2$/s] & 15 \\
\quad Wind field [m/s] & 0.3 / 0.1 (steady) \\
\hline
\multicolumn{2}{@{}l}{\textbf{Robot Configuration \& Sensing}} \\[0.1em]
\quad Speed [m/s] & 2.5 / 3.5 \\
\quad Obstacle inflation [m] & 0.2 \\
\quad PID sensor sampling rate [Hz] & 5 \\
\hline
\multicolumn{2}{@{}l}{\textbf{GMRF Hyperparameters}} \\[0.1em]
\quad Lattice resolution [m] & 0.2 \\
\quad Regularisation variance ($\sigma_r^2$) & 3 \\
\quad Observation variance ($\sigma_s^2$) & 10 \\
\quad Time-decay variance ($\sigma_\zeta^2$) & $1 \times 10^{10}$ \\
\quad Default factor variance ($\sigma_d^2$) & 0.001 \\
\hline
\multicolumn{2}{@{}l}{\textbf{Gas Frontiers}} \\[0.1em]
\quad Critical gas threshold Eq.~\eqref{eq:gamma_def} & $\tau_{\text{gas,min}}=2$ ppm, $q_p=10$ \\
\hline
\end{tabular}
\end{table}

The simulated mobile platform travelled at a fixed speed and was equipped with a PID sensor sampling at 5 Hz for gas detection and a standard LiDAR for range measurements. Online occupancy mapping was performed using the open-source SLAM Toolbox \cite{Macenski2021SLAMToolbox}, with obstacles inflated by approximately 0.2 m to ensure robust planning. To isolate planning performance, true localisation was obtained directly from Gazebo and path following was idealised such that the robot tracked planned trajectories perfectly, deliberately eliminating variance from robot dynamics and control errors that would otherwise obscure differences in the planning metrics.

The \gls{gmrf} \cite{Monroy2016} was used as the \gls{gdm} function with the \gls{gbp} solver from~\cite{Rhodes2023STRUC} to achieve real-time updates. Conventional frontier detection was implemented using the \gls{wfd} algorithm~\cite{Keidar2014EfficientFrontier}, while gas frontiers were identified using the proposed \gls{wgfd} algorithm (Section~\ref{sec:gas_frontier}). Table~\ref{tab:sim_parameters} summarises the key simulation parameters.

\subsection{Evaluation Methodology}
\label{sec:eval_methodology}

The \gls{xit} planner was configured with a batch size of $N = 300$ samples per planning iteration in the industrial scenario and $N = 400$ in the warehouse, reflecting its larger extent. At each iteration, up to five goal regions were selected based on the goal-selection policy. For each selected goal region $f \in \mathcal{F}_k^{\mathrm{occ}} \cup \mathcal{F}_k^{\mathrm{gas}}$, the goal set $\mathbf{X}_{\mathrm{goal}}^{(j)}$ was constructed using the $k_n = 3$ nearest samples from $\mathbf{X}_{\mathrm{s}}$ to the frontier centroid. A small uniform distribution component (20\%) was added to the informed \gls{ucb} sampling distribution (Eq.~\eqref{eq:mixture_sampling_density}) to ensure coverage of the entire free space, as described in Section~\ref{subsec:algorithm properties}. All simulations were executed on an AMD Ryzen 9 5950X 16-Core Processor. Table~\ref{tab:xit_parameters} provides a summary of the key \gls{xit} configuration parameters.
\begin{table}[h]
\centering
\footnotesize
\caption{XIT Planner Configuration Parameters.}
\label{tab:xit_parameters}
\begin{tabular}{@{}p{0.55\columnwidth}l@{}}
\hline
Parameter & Value (industrial\,/\,warehouse) \\
\hline
Batch size ($N$) [samples] & 300 / 400 \\
Maximum goal regions & 5 \\
Nearest samples per goal ($k_n$) & 3 \\
Uniform sampling component & 20\% \\
\hline
\end{tabular}
\end{table}

\begin{figure*}[!t]
    \centering
    \includegraphics[width=\textwidth]{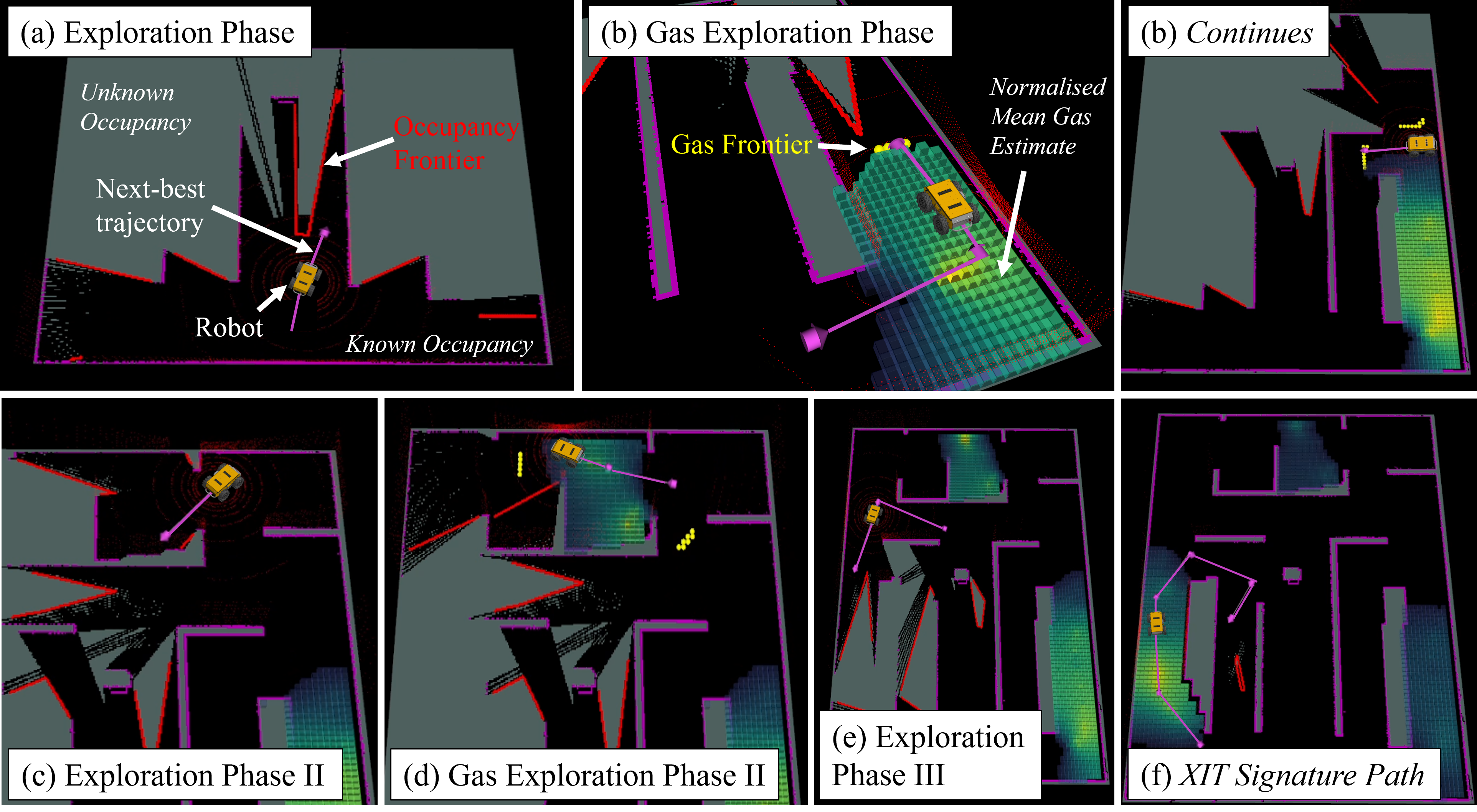}
    \caption{Illustrative run of the active \gls{gdm} process in simulation (industrial scenario) using \gls{xit}-GFF with \gls{ucb} cost. Only the selected \gls{nbt} is shown in each phase for clarity. The normalised posterior mean gas estimate is overlaid using the viridis colormap (yellow indicates higher concentration). (a) Initial structural exploration with occupancy frontiers. (b) Gas frontier detection triggers targeted probing of critical gas boundaries. (c--e) Alternation between occupancy exploration and gas-structure exploration as new critical regions are encountered. (f) Example of a gas-exploitative path that traverses high-concentration regions rather than taking the shortest route.}
    
    \label{fig:activegdm_process}
\end{figure*}

Three \gls{xit} goal-selection policies are tested. \textit{\gls{xit}-F} selects up to 5 occupancy frontiers at random. \textit{\gls{xit}-FGF} queues all available occupancy frontiers first (randomly selecting 5 if more exist), then fills remaining slots with random gas frontiers. \textit{\gls{xit}-GFF} reverses this: gas frontiers first, then occupancy frontiers fill remaining slots.

Two cost formulations are evaluated: \textit{UCB}, the proposed information penalty cost at $\alpha = 0$, which minimised closed-loop mapping RMSE in the ablation of Section~\ref{subsec:alpha_ablation}, for both tree expansion (Eq.~\eqref{eq:edge-cost}) and trajectory selection (Eq.~\eqref{eq:nbt2}), representing gas exploitation planning, and \textit{Euclidean}, distance-only cost for tree expansion. \gls{xit}-GFF and \gls{xit}-FGF are each evaluated under both cost formulations. 
Performance is benchmarked against \emph{RRT* Frontier Exploration}, which uses RRT* with distance-based cost to plan paths to occupancy frontiers, representing standard frontier-based exploration without gas-awareness. Note that \gls{xit}-F with Euclidean cost would yield similar performance to this baseline at the scale of this active \gls{gdm} experiment due to the absence of gas-awareness in both approaches, and is therefore not separately evaluated. \gls{xit}-F with \gls{ucb} cost is included, yielding five total \gls{xit} configurations plus the baseline.

Each trial began with the robot at a fixed starting location in an initially unknown environment and ran for a fixed duration of 5 minutes (industrial) and 10 minutes (warehouse), reflecting their different extents, to ensure consistent conditions for comparing the different algorithm configurations. To reduce variance, 20 Monte Carlo simulations were conducted for each configuration in each scenario, with performance averaged across trials.

System performance was evaluated using three metrics. The first two assess \gls{gdm} quality in critical gas regions, defined for evaluation purposes as free-space cells where ground truth concentration exceeds the scenario's evaluation critical gas threshold ($z_{\text{thresh}} = 2.5$ ppm industrial, $5.5$ ppm warehouse). These critical regions, corresponding to areas of significant acetone presence, are visualised spatially in Fig.~\ref{fig:gt_two_scenarios} (outlined in red). This focus on critical regions is particularly important for entropy evaluation, as computing entropy across the entire map would primarily reflect the number of measurements obtained in newly explored areas rather than the quality of gas distribution inference. The metrics are: (i) root mean square error (RMSE), computed as in Eq.~\eqref{eq:rmse}, measuring mapping accuracy against ground truth obtained from a fine lawnmower sweep (Fig.~\ref{fig:gt_two_scenarios}); and (ii) total differential entropy, computed as in Eq.~\eqref{eq:entropy}, quantifying total uncertainty in the gas distribution estimate within these highly concentrated areas. The third metric measures exploration completeness, defined as the fraction of ground-truth map cells whose occupancy state has been identified by the online map at time $k$. Final performance is reported as percentage improvement relative to the RRT* Frontier Exploration baseline.

\begin{figure}[htb]
    \centering
    \includegraphics[width=\columnwidth]{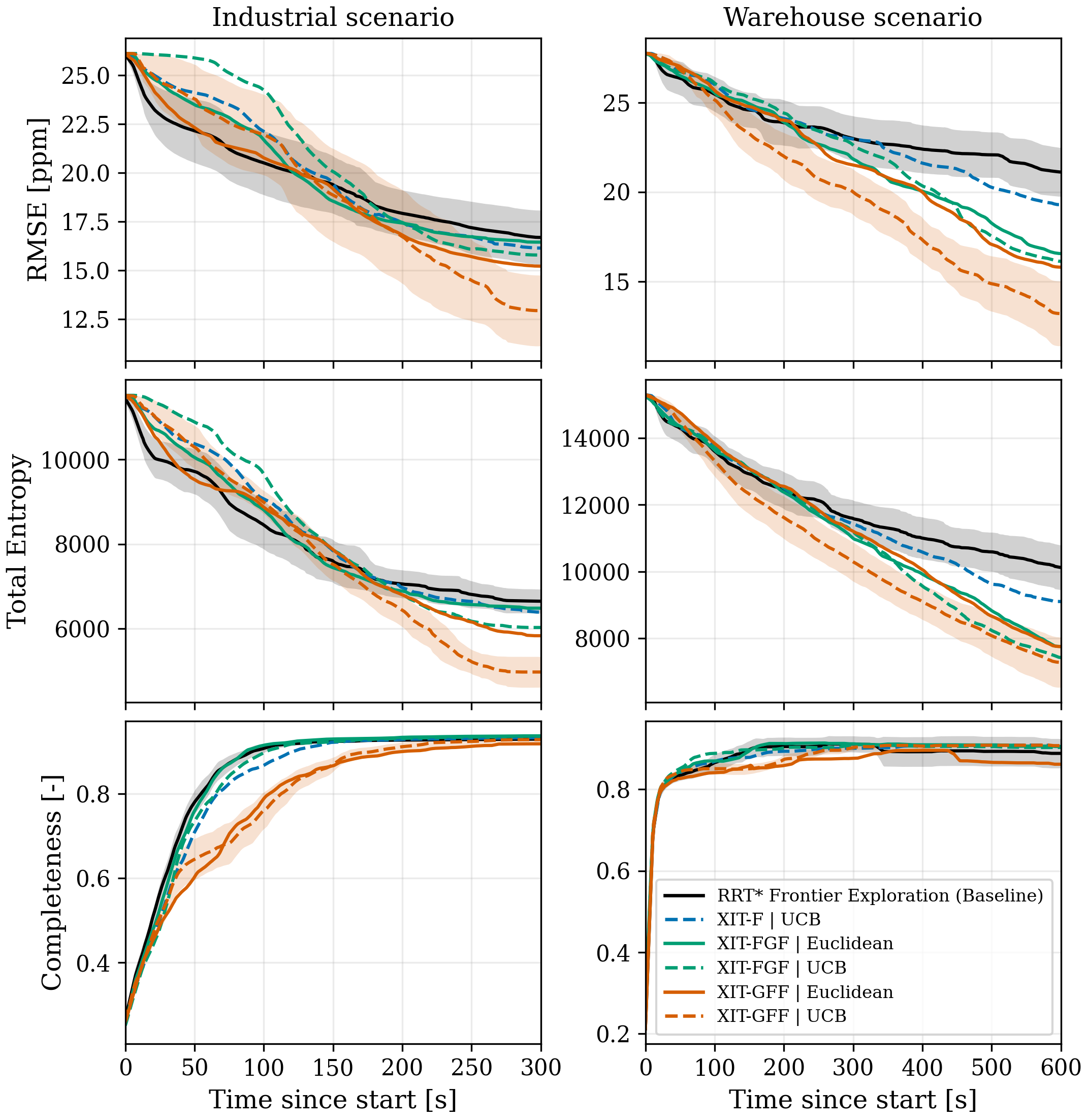}
    \caption{Performance comparison over time in the industrial (left, 300 s missions) and warehouse (right, 600 s missions) scenarios, in terms of RMSE (top), total differential entropy (middle), and map completeness (bottom), for different goal-selection policies and cost formulations. Shaded regions indicate 95\% confidence intervals for \gls{xit}-GFF with \gls{ucb} (best performing) and RRT* Frontier Exploration (baseline).}
    \label{fig:data3plots}
\end{figure}

\subsection{Results}
\label{sec:sim_results}

\begin{table*}[htb]
\centering
\caption{Final GDM Performance at Mission End ($T{=}300$~s Industrial, $T{=}600$~s Warehouse; Mean $\pm$ Std Over 20 Trials).}
\label{tab:hifi_final_metrics}
\footnotesize
\setlength{\tabcolsep}{5.5pt}
\renewcommand{\arraystretch}{1.05}

\begin{tabular}{@{}l l c c c c@{}}
\hline
Method & Cost & RMSE$_T$ [ppm] & RMSE Reduction [\%] & $H_T$ & Entropy Reduction [\%] \\
\hline
\multicolumn{6}{@{}l}{\textit{Industrial scenario}} \\
RRT* Frontier Exploration & Euclidean & $16.7 \pm 2.6$ & $0.0$  & $6652 \pm 524$ & $0.0$ \\
XIT-F & UCB       & $16.1 \pm 2.8$ & $3.3$ & $6395 \pm 862$ & $3.9$ \\
XIT-FGF        & Euclidean & $16.4 \pm 2.7$ & $1.5$  & $6488 \pm 728$ & $2.5$ \\
XIT-FGF        & UCB       & $15.8 \pm 3.1$ & $5.4$ & $6032 \pm 987$ & $9.3$ \\
XIT-GFF        & Euclidean & $15.2 \pm 2.2$ & $8.8$  & $5838 \pm 578$ & $12.2$ \\
XIT-GFF        & UCB       & $12.9 \pm 3.3$ & $22.5$ & $4981 \pm 669$  & $25.1$ \\
\hline
\multicolumn{6}{@{}l}{\textit{Warehouse scenario}} \\
RRT* Frontier Exploration & Euclidean & $21.1 \pm 2.6$ & $0.0$  & $10122 \pm 1279$ & $0.0$ \\
XIT-F & UCB       & $19.3 \pm 2.9$ & $8.7$ & $9099 \pm 1269$ & $10.1$ \\
XIT-FGF        & Euclidean & $16.6 \pm 2.3$ & $21.6$  & $7751 \pm 1012$ & $23.4$ \\
XIT-FGF        & UCB       & $16.1 \pm 4.4$ & $23.7$ & $7425 \pm 1620$ & $26.6$ \\
XIT-GFF        & Euclidean & $15.8 \pm 2.7$ & $25.2$  & $7752 \pm 517$ & $23.4$ \\
XIT-GFF        & UCB       & $13.2 \pm 3.6$ & $37.5$ & $7280 \pm 1489$  & $28.1$ \\
\hline
\multicolumn{6}{@{}l}{\footnotesize Percentage reductions computed relative to RRT* Frontier Exploration baseline.} \\
\end{tabular}

\end{table*}

Figure~\ref{fig:activegdm_process} illustrates a representative run of the \gls{xit}-GFF configuration with the \gls{ucb} cost, demonstrating the natural alternation between structural exploration and gas-specific probing and exploitation. Initially, the robot explores the environment using occupancy frontiers. Upon encountering critical gas regions, gas frontiers are detected and prioritised due to the policy, leading the robot to probe the boundaries of the critical gas structure it finds itself in. Once local gas structure is resolved, the system returns to occupancy exploration. As the gas map becomes fairly populated, \gls{ucb}-driven gas-exploitative paths towards these goals become more significant, as seen in Fig.~\ref{fig:activegdm_process}f, creating an adaptive exploration and exploitation process.

Figure~\ref{fig:data3plots} presents the temporal evolution of all three metrics averaged over 20 Monte Carlo simulations for each configuration in both scenarios. The \gls{ucb} information penalty cost function achieved better final gas-mapping quality (RMSE and entropy) than the Euclidean cost for the same goal-selection policy, while achieving comparable final exploration coverage. This is consistent with the \(\alpha\) ablation of Section~\ref{subsec:alpha_ablation}: the information-driven cost (\(\alpha=0\)) attains the more accurate gas map, with the trade-off appearing here as slower early coverage rather than increased travel effort.

The RMSE plot (Fig.~\ref{fig:data3plots}, top left) shows that \gls{xit}-GFF with \gls{ucb} cost achieves the greatest reduction in mapping error, reaching below 13 ppm by the end of the mission; this represents a 22.5\% improvement over the RRT* Frontier Exploration baseline (see Table~\ref{tab:hifi_final_metrics} for detailed metrics). In this scenario, structural exploration dominates the early mission, so the baseline initially reduces RMSE fastest; the gas-directed policies overtake in the second half of the mission as their targeted probing of the critical gas regions pays off. The entropy plot (Fig.~\ref{fig:data3plots}, middle left) demonstrates similar trends, with \gls{xit}-GFF using \gls{ucb} cost achieving 25.1\% lower entropy in critical regions. Map completeness (Fig.~\ref{fig:data3plots}, bottom left) shows similar final coverage across all methods, with the frontier-first variants reaching near-complete coverage by the halfway mark and the gas-first policies arriving more gradually, as early gas probing defers some structural exploration, indicating that the gas-frontier mechanism does not compromise final exploration coverage. The primary performance differences lie in gas mapping quality (RMSE and entropy) rather than spatial coverage.

The right column of Fig.~\ref{fig:data3plots} presents the corresponding results for the warehouse scenario. Here the separation from the baseline is more pronounced: the open space is covered within the first two minutes, so the remaining mission rewards targeted probing of the extensive critical gas regions. Every \gls{xit} configuration improves on the baseline, the gas-frontier policies are clearly ahead under both cost formulations, and \gls{xit}-GFF with \gls{ucb} cost again performs best, reducing final RMSE by 37.5\% and entropy by 28.1\% (Table~\ref{tab:hifi_final_metrics}). The larger advantage reflects the warehouse's greater extent and its broad plumes, which reward sustained information-driven probing.

In all \gls{xit} policies, at the start of each run when no gas frontiers exist, occupancy frontiers are automatically prioritised, initiating the environmental exploration phase as illustrated in Fig.~\ref{fig:activegdm_process}. Among the goal-selection policies, queuing gas frontiers first followed by occupancy frontiers (\gls{xit}-GFF) proved most effective. This result is intuitive: prioritising gas frontiers encourages the robot to thoroughly explore known gas regions before moving to new areas where gas may or may not be present. The gas frontier mechanism provides a structured approach to this, as probed frontiers are removed once local gas structure is resolved, naturally guiding the robot away from already-explored gas regions. This contrasts with concentration-based or uncertainty-based methods, which can exhibit suboptimal looping by revisiting high-concentration or high-variance cells without clear termination criteria, as observed in \cite{Ercolani2022}. While \cite{Ercolani2022} addressed this by discretising the space with K-means clustering and adaptively sampling within predefined clusters in obstacle-free environments, our approach must simultaneously explore an unknown environment while mapping gas. We integrate occupancy frontiers for structural exploration with gas frontiers for targeted probing, where the \gls{ucb} cost function drives gas-exploitative paths and the system naturally transitions between objectives based on emerging critical gas regions rather than predefined spatial partitions. Between the other \gls{xit} policies, \gls{xit}-FGF prioritises occupancy frontiers first, emphasising a more exploration-focused approach that may delay thorough investigation of known gas structures. Notably, in both policies, when both gas frontiers and occupancy frontiers are in the queue, the most gas-exploitative trajectory is identified via the \gls{ucb} trajectory selection function (Eq.~\eqref{eq:nbt2}). It was observed that this often favours trajectories to gas frontiers due to the higher \gls{ucb} density between the robot and these boundaries, which likely explains why \gls{xit}-FGF with \gls{ucb} cost achieves a stronger entropy reduction in critical regions than \gls{xit}-F with \gls{ucb} cost, which does not consider gas frontiers as a goal source.

Table~\ref{tab:hifi_final_metrics} summarises the final performance metrics at the end of each mission, confirming that \gls{xit}-GFF with \gls{ucb} cost achieves the best final performance on both gas-mapping metrics in both scenarios.

\section{Experimental Trial}

\begin{figure}[t]
    \centering
    \includegraphics[width=0.95\columnwidth]{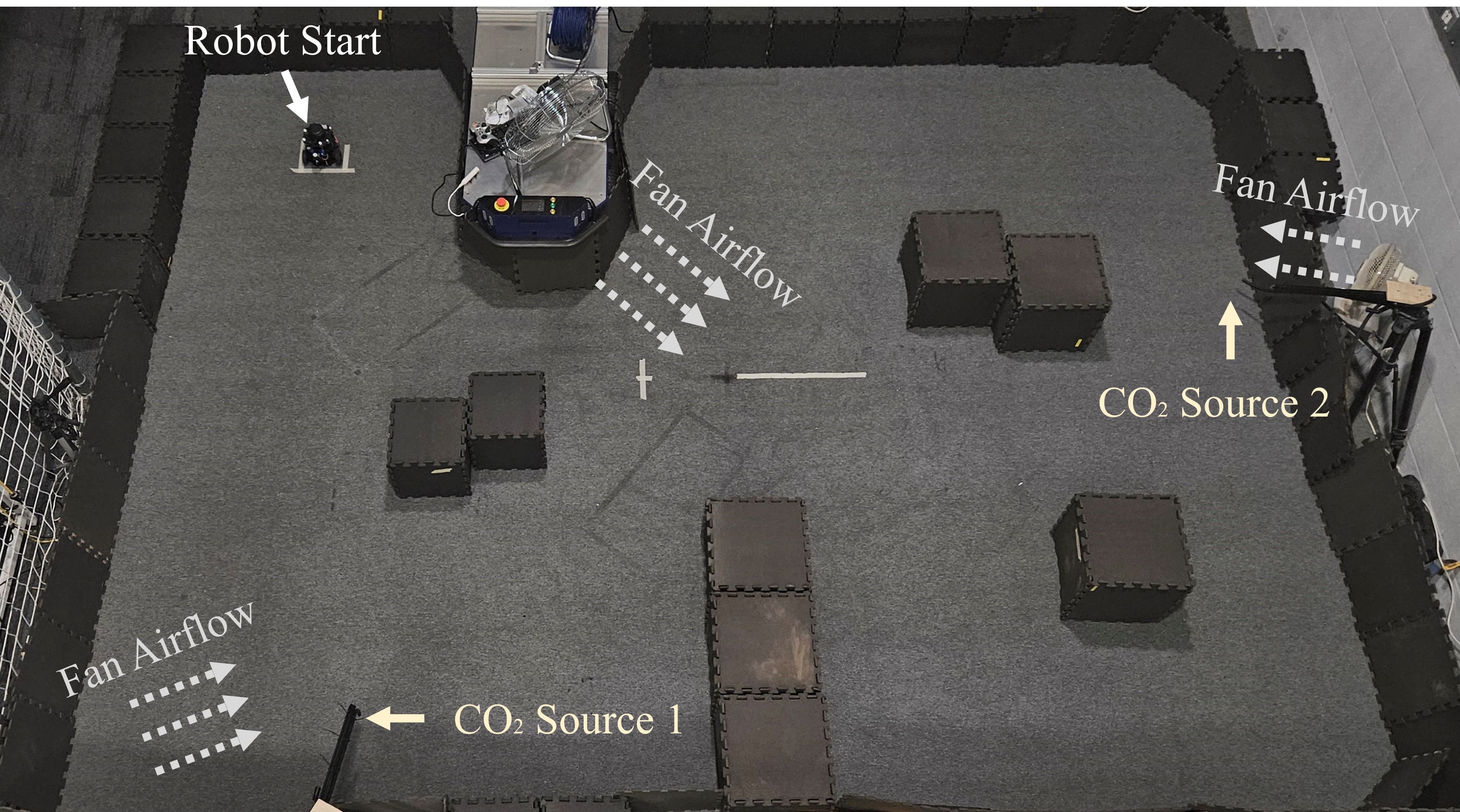}
    \par\vspace{0.5ex}
    (a)
    \par\vspace{1.5ex}

    \begin{minipage}[t]{0.48\columnwidth}
        \centering
        \includegraphics[width=\linewidth]{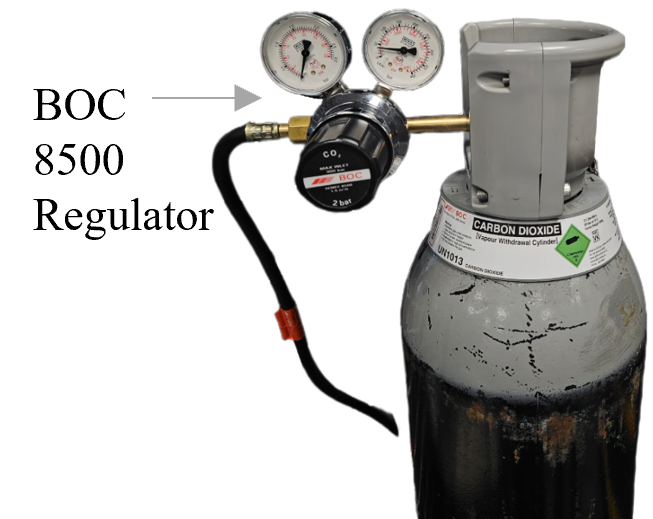}
        \par\vspace{0.5ex}
        (b)
    \end{minipage}\hfill
    \begin{minipage}[t]{0.48\columnwidth}
        \centering
        \includegraphics[width=\linewidth]{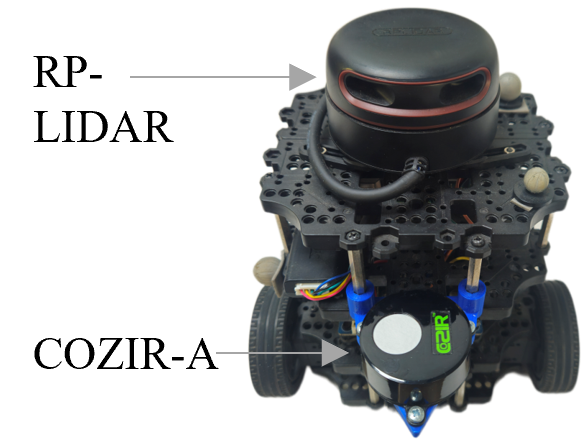}
        \par\vspace{0.5ex}
        (c)
    \end{minipage}

    \caption{
        Experimental setup. 
        (a) Top-down view of the 5.0 m × 3.7 m arena showing airflow directions from fans (dashed arrows) and CO$_2$ release nozzles (Source 1 and Source 2). Note that the two fans on the left are more powerful than the single and smaller fan on the right, creating asymmetric dispersion patterns. 
        (b) BOC 8500 regulator attached to the pressurised CO$_2$ cylinder used for controlled gas release. 
        (c) TurtleBot3 platform equipped with a COZIR-A CO$_2$ sensor and RP-LiDAR for gas mapping and navigation.
    }
    \label{fig:exp_setup}
\end{figure}

\begin{figure}
    \centering
    \includegraphics[width=0.95\columnwidth]{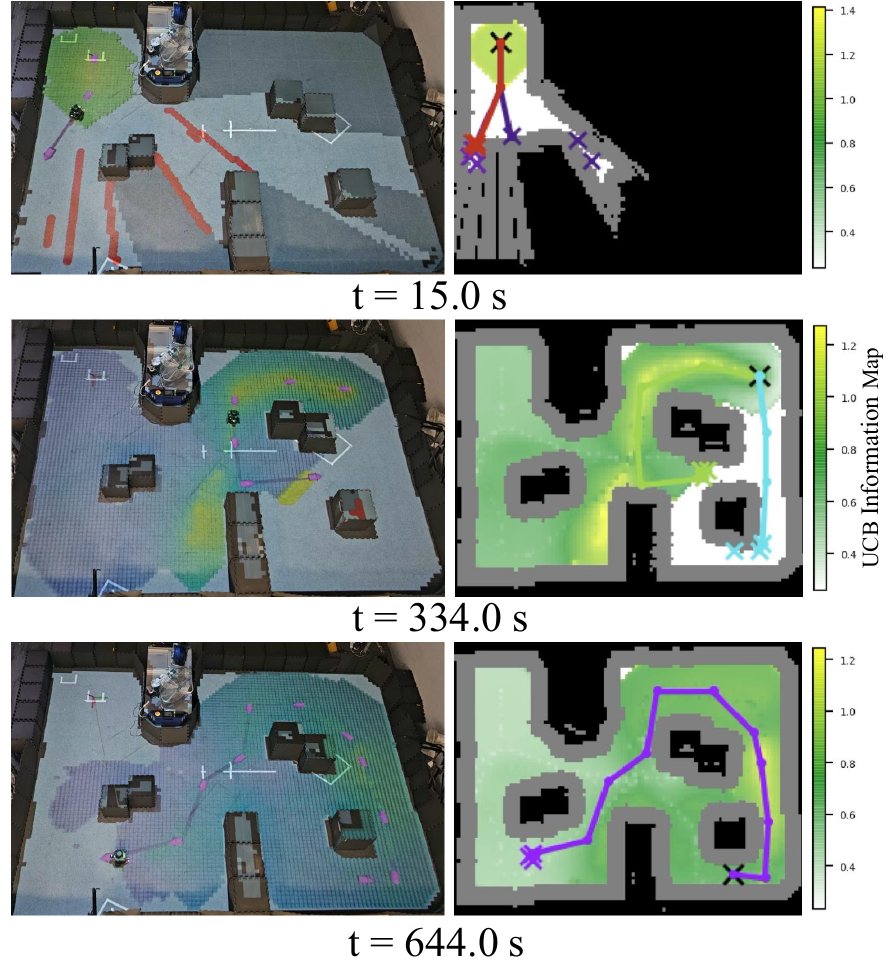}
\caption{
Progression of the real-world experiment using the proposed \gls{xit} planner in the scaled indoor gas-leak environment. 
Each row shows the top-down view of the arena (left) with the normalised gas mean map projected on the floor, where red, yellow, and purple lines indicate frontiers, gas frontiers, and the \gls{nbt}, respectively. 
The corresponding \gls{xit} paths and \gls{ucb} information field are shown on the right with obstacle inflation in grey. 
From top to bottom: \gls{xit} begins in a pure exploration phase, as the starting location lies outside the critical gas region, proceeds through gas-exploration phases around both sources (the second shown in the middle), and concludes with a final gas-exploitative trajectory that passes downwind through both gas regions toward the goal. A video of the experiment can be found in the supplementary material.
}
    \label{fig:exp_progression}
\end{figure}

A physical experiment was conducted to validate the proposed \gls{xit} planner and broader active \gls{gdm} framework (Fig.~\ref{fig:overview}) in a scaled indoor environment (Fig.~\ref{fig:exp_setup}) designed to emulate a confined gas-leak scenario. The arena measured 5.0 m × 3.7 m and was enclosed with barriers to mimic the restricted airflow of an indoor space. Two CO$_2$ sources were installed at opposite sides of the arena, each connected to a pressurised CO$_2$ cylinder via a BOC Series 8500 multi-stage regulator (Fig.~\ref{fig:exp_setup}b). CO$_2$ was selected as the tracer gas due to its density being greater than that of air, which causes it to settle near ground level, creating realistic stratification in the scaled environment. Three fans generated controlled airflow to produce a complex, multi-region gas plume pattern that challenged the planner’s ability to identify and exploit informative regions.

The mobile platform was a TurtleBot3 (Fig.~\ref{fig:exp_setup}c) equipped with a COZIR-A infrared CO$_2$ sensor mounted at 0.15 m height to sample near the gas layer. The robot travelled at 0.15 m/s with obstacle inflation of 0.2 m for safe navigation. A Vicon motion-capture system provided high-accuracy odometry, while the occupancy grid map was generated online using SLAM Toolbox \cite{Macenski2021SLAMToolbox} and gas distribution mapping was performed by \gls{gmrf} \cite{Monroy2016} with the \gls{gbp} solver \cite{Rhodes2023STRUC}, as in the simulation study. Path following employed a simple proportional controller that maintains constant forward velocity while adjusting angular velocity based on heading error to the next waypoint. Due to the slow response of the COZIR-A sensor, the robot executed a stop-and-sample routine at each vertex of the \gls{nbt}, remaining stationary for 5 seconds and using the averaged reading for \gls{gdm}. This contrasts with the faster acetone PID sensor used in the simulation study, where continuous sensing along the trajectory was possible.

The experiment employed \gls{xit}-GFF with the \gls{ucb} cost function ($\alpha = 0$) for both tree expansion (Eq.~\eqref{eq:edge-cost}) and trajectory selection (Eq.~\eqref{eq:nbt2}), the most successful configuration from simulation, with up to five candidate goal regions evaluated per planning iteration. A fixed threshold $\tau_{\text{gas}} = 600$ ppm was used in place of the dynamic threshold (Eq.~\eqref{eq:gamma_def}) due to the limited spatial extent of the scaled environment and the slow response characteristics of available CO$_2$ sensors when detecting concentration changes in the range just above ambient background levels (approximately 400 ppm). The experiment started 30 s after initiating gas release to allow plume formation and continued for approximately eleven minutes. Once all frontiers and gas frontiers were resolved, \gls{xit} entered a fallback mode in which goals were sampled randomly in free space. The run terminated after two consecutive \gls{xit} trajectories failed to reveal any new frontiers or gas frontiers.

During the run, the robot began in an unknown region outside the gas plume and initially navigated toward frontiers to build the occupancy map, as shown at the top of Fig.~\ref{fig:exp_progression}. As the robot approached the first source, gas frontiers were established, prompting \gls{xit} to probe the local gas boundaries around the source. The robot then transitioned to an exploration phase that led it near the second source, where a subsequent gas-exploration phase occurred within a more complex dispersion region (Fig.~\ref{fig:exp_progression}, middle row). Throughout the experiment, \gls{xit} generated trajectories through high-\gls{ucb} areas, often gas-exploitative regions, ensuring that high-concentration zones were revisited when possible. An example of this behaviour is shown in the bottom row of Fig.~\ref{fig:exp_progression}, where the robot passes through both regions downwind of the sources on its way to the final goal. The final gas map (Fig.~\ref{fig:final_gas_map}) showed CO$_2$ levels exceeding 2000~ppm in some regions, forming extended gas zones with higher concentrations near the release points. Notably, higher concentrations are observed near Source 2 on the right side of the arena, consistent with the asymmetric airflow pattern created by the two more powerful fans on the left side directing flow toward this region. 

\begin{figure}
    \centering
    \includegraphics[width=0.95\columnwidth]{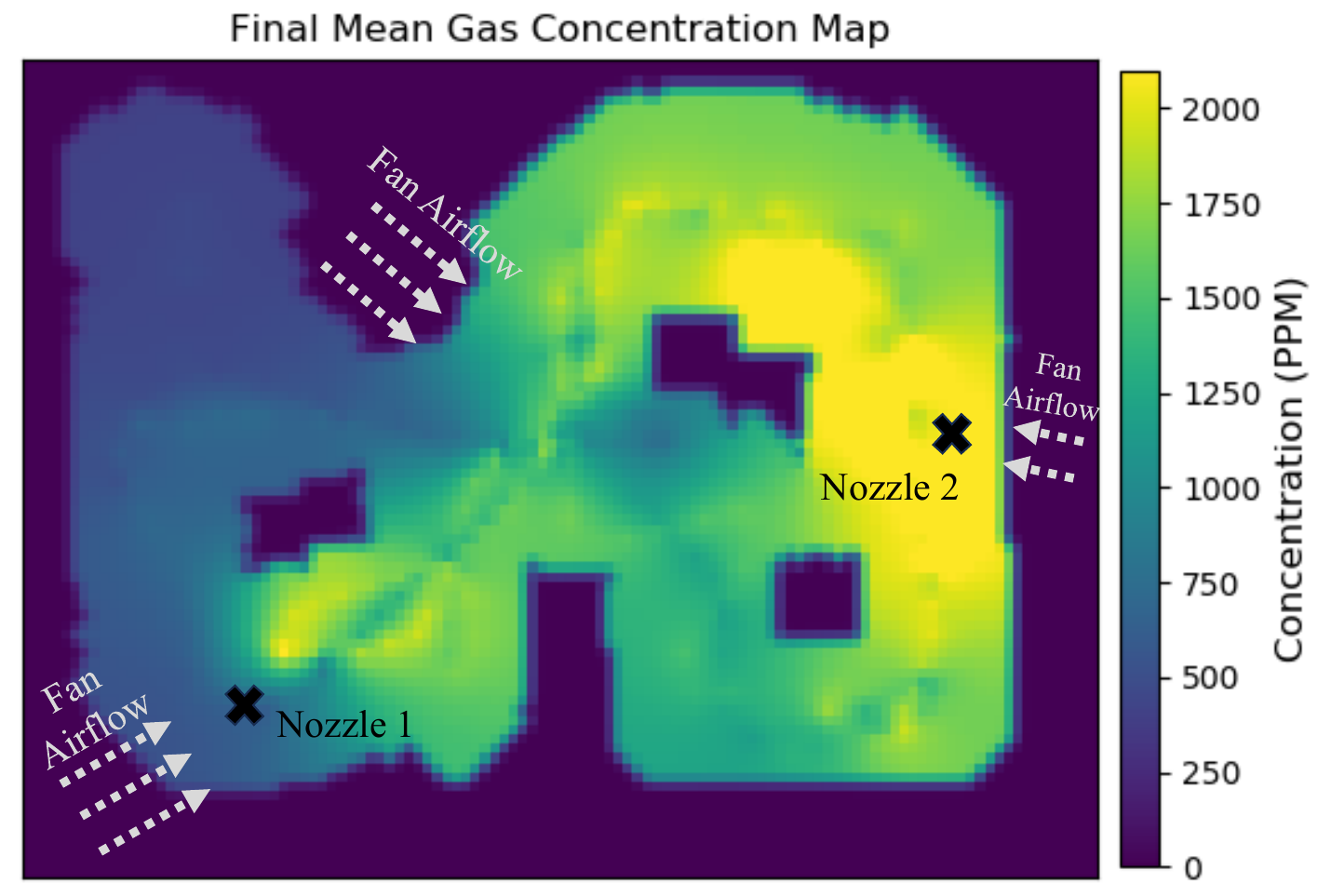}
    \caption{
    Final mean gas concentration map obtained in the real-world experiment using the \gls{xit} planner within the active \gls{gdm} framework. Source locations and airflow directions (dashed arrows) are indicated as in Fig.~\ref{fig:exp_setup}a.
}
    \label{fig:final_gas_map}
\end{figure}

\section{Conclusion}

This paper presents a novel active \gls{gdm} framework for unknown, cluttered environments, where a robot must simultaneously reason about environmental exploration, structural mapping, and gas dispersion inference. At the core of this framework is \gls{xit}, a sampling-based \gls{ipp} planner that addresses these objectives through: (i) occupancy frontiers for environmental exploration, (ii) gas frontiers for gas structure exploration, and (iii) information field guided tree expansion and cost design that favours high-concentration, high-uncertainty regions with gas exploitation and travel distance trade-off. We establish theoretical properties of \gls{xit}, proving probabilistic completeness and asymptotic optimality, and demonstrate empirical convergence and other characterisations in synthetic simulations.

High-fidelity Monte Carlo simulations in two large indoor scenarios validate the effectiveness of the proposed framework. The \gls{ucb}-based cost function consistently outperforms distance-based planning in gas mapping quality (RMSE and entropy) while maintaining comparable coverage efficiency. Among the \gls{xit} goal-selection policies tested, prioritising the queuing of gas frontiers over occupancy frontiers (\gls{xit}-GFF) proves most effective. The introduction of gas frontiers provides an intuitive and natural mechanism for gas structure exploration. As probed frontiers are removed once local gas structure is resolved, the robot is naturally guided away from already-explored gas regions toward unknown areas which may contain critical gas. This inherently prevents the suboptimal looping behaviour observed in concentration-based or uncertainty-based methods, which can revisit high-concentration or high-variance cells without clear or natural termination criteria. %

Physical validation in a scaled indoor CO$_2$ release experiment demonstrates real-world feasibility, with \gls{xit} successfully navigating and mapping a complex two-source scenario with asymmetric airflow patterns producing overlapping complex plume regions. Although developed for \gls{gdm}, the framework's structure, combining information-guided sampling, multi-objective tree expansion, and frontier-based goal selection, provides a readily applicable template for robotic information gathering tasks in unknown environments that undoubtedly face the exploration-exploitation trade-off. Future work includes extending the framework to 3D and larger-scale scenarios and integrating more advanced dispersion models and sensor-calibration methods.

{\footnotesize
\bibliographystyle{IEEEtranN}
\bibliography{references}
}

\end{document}